\documentclass{article}


\usepackage[nonatbib, preprint]{neurips_2021}
\usepackage[numbers]{natbib}



\usepackage[utf8]{inputenc} 
\usepackage[T1]{fontenc}    
\usepackage{hyperref}       
\usepackage{url}            
\usepackage{booktabs}       
\usepackage{amsfonts}       
\usepackage{nicefrac}       
\usepackage{microtype}      
\usepackage{xcolor}         

\usepackage{graphicx}
\usepackage{subcaption}
\usepackage{wrapfig}
\usepackage{caption}
\usepackage{lipsum}
\usepackage{xcolor}
\usepackage{amsmath}
\usepackage{amssymb}
\usepackage{chngcntr}
\usepackage{pifont}
%
%

\usepackage{amsthm}
\makeatletter
\def\thm@space@setup{%
  \thm@preskip=0.5cm
  \thm@postskip=0.4cm 
}
\makeatother
\newtheorem{definition}{Definition}
\newtheorem{proposition}{Proposition}
\counterwithin*{proposition}{section}
\newtheorem{lemma}{Lemma}
\newcommand{\obj}{\mathbb{J}} 

\newcommand{\varz}{Z} 
\newcommand{\valz}{\mathbf{z}} 
\newcommand{\setz}{\mathcal{Z}} 
\newcommand{\vars}{S}
\newcommand{\vals}{\mathbf{s}}
\newcommand{\sets}{\mathcal{S}}
\newcommand{\vara}{A}
\newcommand{\vala}{\mathbf{a}}
\newcommand{\seta}{\mathcal{A}}
\newcommand{\varr}{R} 
\newcommand{\valr}{r} 
\newcommand{\funcr}{r} 
\newcommand{\setr}{\mathcal{R}} 
\newcommand{\rep}{\phi_{\setz}} 
\newcommand{\setrep}{\Phi} 
\newcommand{\return}{\bar{R}}

\newcommand{\fwd}{\mathbb{J}_{fwd}}
\newcommand{\inv}{\mathbb{J}_{inv}}
\newcommand{\cpc}{\mathbb{J}_{state}}



\usepackage{amsmath,amsfonts,bm}









\def\eqref#1{equation~\ref{#1}}









\def\1{\bm{1}}










\DeclareMathAlphabet{\mathsfit}{\encodingdefault}{\sfdefault}{m}{sl}
\SetMathAlphabet{\mathsfit}{bold}{\encodingdefault}{\sfdefault}{bx}{n}











\newcommand{\E}{\mathbb{E}}



\DeclareMathOperator*{\argmax}{arg\,max}

\title{Which Mutual-Information Representation Learning Objectives are Sufficient for Control?}

%

\author{
  Kate Rakelly\thanks{Correspondence to rakelly@eecs.berkeley.edu} \qquad Abhishek Gupta \qquad Carlos Florensa \qquad Sergey Levine \\
  University of California, Berkeley \\
  \{rakelly, abhigupta, florensacc, svlevine\}@eecs.berkeley.edu
}

\begin{document}

\maketitle

\begin{abstract}
Mutual information maximization provides an appealing formalism for learning representations of data. In the context of reinforcement learning (RL), such representations can accelerate learning by discarding irrelevant and redundant information, while retaining the information necessary for control. Much of the prior work on these methods has addressed the practical difficulties of estimating mutual information from samples of high-dimensional observations, while comparatively less is understood about \emph{which} mutual information objectives yield representations that are sufficient for RL from a theoretical perspective. In this paper, we formalize the sufficiency of a state representation for learning and representing the optimal policy, and study several popular mutual-information based objectives through this lens. Surprisingly, we find that two of these objectives can yield insufficient representations given mild and common assumptions on the structure of the MDP. We corroborate our theoretical results with empirical experiments on a simulated game environment with visual observations. 
\end{abstract}

\section{Introduction}
Deep reinforcement learning (RL) algorithms are in principle capable of learning policies from high-dimensional observations, such as camera images~\citep{mnih2013playing, lee2019slac, kalashnikov2018scalable}. However, policy learning in practice faces a bottleneck in acquiring useful representations of the observation space~\citep{shelhamer2016loss}. 
State representation learning approaches aim to remedy this issue by learning structured and compact representations on which to perform RL. 
A useful state representation should be \emph{sufficient} to learn and represent the optimal policy or the optimal value function, while discarding irrelevant and redundant information. 
Understanding whether or not an objective is guaranteed to yield sufficient representations is important, because insufficient representations make it impossible to solve certain problems. 
For example, an autonomous vehicle would not be able to navigate safely if its state representation did not contain information about the color of the stoplight in front of it.
With the increasing interest in leveraging offline datasets to learn representations for RL~\citep{finn2016deep, kipf2019contrastive, stooke2020decoupling}, the question of sufficiency becomes even more important to understand if the representation is capable of representing policies and value functions for downstream tasks. 

While a wide range of representation learning objectives have been proposed in the literature~\citep{lesort2018state}, in this paper we focus on analyzing sufficiency of representations learned by maximizing the mutual information (MI) between random variables.
Prior work has proposed many different MI objectives involving the variables of states, actions, and rewards at different time-steps~\citep{anand2019unsupervised, oord2018representation, pathak2017curiosity, shelhamer2016loss}.
While much prior work has focused on how to optimize these various mutual information objectives in high dimensions~\citep{song2019understanding, belghazi2018mine, oord2018representation, hjelm2018learning}, we focus instead on their ability to yield theoretically sufficient representations.
We find that two commonly used objectives are insufficient for the general class of MDPs, in the most general case, 
and prove that another typical objective is sufficient. 
We illustrate the analysis with both didactic examples in which MI can be computed exactly and deep RL experiments in which we approximately maximize the MI objective to learn representations of visual inputs.
The experimental results corroborate our theoretical findings, and demonstrate that the sufficiency of a representation can have a substantial impact on the performance of an RL agent that uses that representation.
This paper provides guidance to the deep RL practitioner on when and why objectives may be expected to work well or fail, and also provides a formal framework to analyze newly proposed representation learning objectives based on MI.

\section{Related Work}

\label{sec:rw}
In this paper, we analyze several widely used mutual information objectives for control.
In this section we first review MI-based unsupervised learning, then the application of these techniques to the RL setting.
Finally, we discuss alternative perspectives on representation learning in RL.

\textbf{Mutual information-based unsupervised learning.}
A common technique for unsupervised representation learning based on the InfoMax principle~\citep{linsker1988self, bell1995information} is to maximize the MI between the input and its latent representation subject to domain-specific constraints~\citep{becker1992self}.
This technique has been applied to learn representations for natural language~\citep{devlin2019bert}, video~\citep{sun2019contrastive}, and images~\citep{bachman2019learning, hjelm2018learning} and even policy learning via RL in high dimensions~\cite{srinivas2020curl}.
To address the difficulties of estimating MI from samples~\citep{mcallester2018formal} and with high-dimensional inputs~\citep{song2019understanding},
much recent work has focused on improving MI estimation via variational methods~\citep{song2019understanding, poole2019variational, oord2018representation, belghazi2018mine}.
In this work we are concerned with analyzing the MI objectives, and not the estimation method.
In our experiments with image observations, we use noise contrastive estimation methods~\citep{gutmann2010noise}, though other choices could also suffice.

\textbf{Mutual information objectives in RL.}
Reinforcement learning adds aspects of temporal structure and control to the standard unsupervised learning problem discussed above (see Figure~\ref{fig:pgm}).
This structure can be leveraged by maximizing MI between sequential states, actions, or combinations thereof.
Some works omit the action, maximizing the MI between current and future states~\citep{anand2019unsupervised, oord2018representation, stooke2020decoupling}.
Several prior works~\citep{nachum2018near, schwarzerdata, shu2020predictive, mazoure2020deep} maximize MI objectives that closely resemble the forward information objective we introduce in Section~\ref{sec:obj}, whiel others optimize related objectives by learning latent forward dynamics models~\citep{watter2015embed, karl2016deep, zhang2018solar, hafner2019learning, lee2019slac}. 
Multi-step inverse models, closely related to the inverse information objective (Section~\ref{sec:obj}),
have been used to learn control-centric representations~\citep{yuunsupervised, gregor2016variational}.
Single-step inverse models have been deployed as regularization of forward models~\citep{zhang2018decoupling, agrawal2016learning} and as an auxiliary loss for policy gradient RL~\citep{shelhamer2016loss, pathak2017curiosity}.
The MI objectives that we study have also been used as reward bonuses to improve exploration, without impacting the representation, in the form of empowerment~\citep{klyubin2008keep, klyubin2005empowerment, mohamed2015variational, leibfried2019unified} and information-theoretic curiosity~\citep{still2012information}.

\textbf{Representation learning for reinforcement learning.}
In RL, the problem of finding a compact state space has been studied as state aggregation or abstraction~\citep{bean1987aggregation, li2006towards}.
Abstraction schemes include bisimulation~\citep{givan2003equivalence}, homomorphism~\citep{ravindran2003smdp}, utile distinction~\citep{mccallum1996reinforcement}, and policy irrelevance~\citep{jong2005state}. 
While efficient algorithms exist for MDPs with known transition models for some abstraction schemes such as bisimulation~\citep{ferns2006methods, givan2003equivalence}, in general obtaining error-free abstractions is impractical for most problems of interest.
For approximate abstractions prior work has bounded the sub-optimality of the policy~\citep{bertsekas1988adaptive, dean1997model, abel2016near} as well as the sample efficiency~\citep{lattimore2019learning, van2019comments, du2019good}, with some results in the deep learning setting~\citep{gelada2019deepmdp, nachum2018near}.
In this paper, we focus on whether a representation can be used to learn the optimal policy, and not the tractability of learning.
~\citet{li2006towards} shares this focus; while they establish convergence properties of $Q$-learning with representations satisfying different notions of sufficiency, we leverage their $Q^*$-sufficiency criteria to evaluate representations learned via MI-based objectives. 
Alternative approaches to representation learning for RL include priors based on the structure of the physical world~\citep{jonschkowski2015learning} or heuristics such as disentanglement~\citep{thomas2017independently}, meta-learning general value functions~\citep{veeriah2019discovery}, predicting multiple value functions~\citep{bellemare2019geometric, fedus2019hyperbolic, jaderberg2016reinforcement} and predicting domain-specific measurements~\citep{mirowski2019learning, dosovitskiy2016learning}.
We restrict our analysis to objectives that can be expressed as MI-maximization.
In our paper we focus on the representation learning problem, disentangled from exploration, a strategy shared by prior works~\cite{finn2016deep, kipf2019contrastive, stooke2020decoupling, zhan2020framework}.

\section{Representation Learning for RL}
\label{sec:def}
The goal of representation learning for RL is to learn a compact representation of the state space that discards irrelevant and redundant information, while still retaining sufficient information to represent policies and value functions needed for learning. In this section we formalize this problem, and propose and define the concept of sufficiency to evaluate the usefulness of a representation.

\subsection{Preliminaries}
\label{sec:prelims}
We begin with brief preliminaries of reinforcement learning and mutual information. 

\paragraph{Reinforcement learning.} A Markov decision process (MDP) is defined by the tuple $(\sets, \seta, \mathcal{T}, \valr)$, where $\sets$ is the set of states, $\seta$ the set of actions, $\mathcal{T}: \sets \times \seta \times \sets \rightarrow [0, 1]$ the state transition distribution, and $\valr: \sets \rightarrow \mathbb{R}$ the reward function~\footnote{We restrict our attention to MDPs where the reward can be expressed as a function of the state, which is fairly standard across a broad set of real world RL problems}.
We will use capital letters to refer to random variables and lower case letters to refer to values of those variables (e.g., $\vars$ is the random variable for the state and $\vals$ is a specific state).
Throughout our analysis we will often be interested in multiple reward functions, and denote a set of reward functions as $\setr$.
The objective of RL is to find a policy that maximizes the sum of discounted returns $\bar{R}$ for a given reward function $r$, and we denote this optimal policy as $\pi^*_{\funcr} = \argmax_{\pi} \E_{\pi}[\sum_t \gamma^t r(\vars_t, \vara_t)]$ for discount factor $\gamma$.
We also define the optimal $Q$-function as $Q^*_{\funcr}(\vals_t, \vala_t) = \E_{\pi^*} [\sum_{t=1}^{\infty} \gamma^t r(\vars_t, \vara_t) | \vals_t, \vala_t]$.
The optimal $Q$-function satisfies the recursive Bellman equation, $Q^*_{\funcr}(\vals_t, \vala_t) = r(\vals_t, \vala_t) + \gamma \E_{p(\vals_{t+1} | \vals_t, \vala_t)} \argmax_{\vala_{t+1}} Q^*_{\funcr}(\vals_{t+1}, \vala_{t+1})$. 
An optimal policy and the optimal Q-function are related according to $\pi^*(\vals) = \argmax_{\vala} Q^*(\vals, \vala)$. 

\paragraph{Mutual information.} In information theory, the mutual information (MI) between two random variables, $X$ and $Y$, is defined as \citep{cover1999elements}:
\begin{equation}
    I(X; Y) = \E_{p(x, y)} \log \frac{p(x, y)}{p(x)p(y)} = H(X) - H(X | Y).
\end{equation}
The first definition indicates that MI can be understood as a relative entropy (or KL-divergence), while the second underscores the intuitive notion that MI measures the reduction in the uncertainty of one random variable from observing the value of the other.

\paragraph{Representation learning for RL.} 
\begin{wrapfigure}{R}{0.3\textwidth}
    \vspace{-0.4cm}
    \includegraphics[height=0.115\textheight]{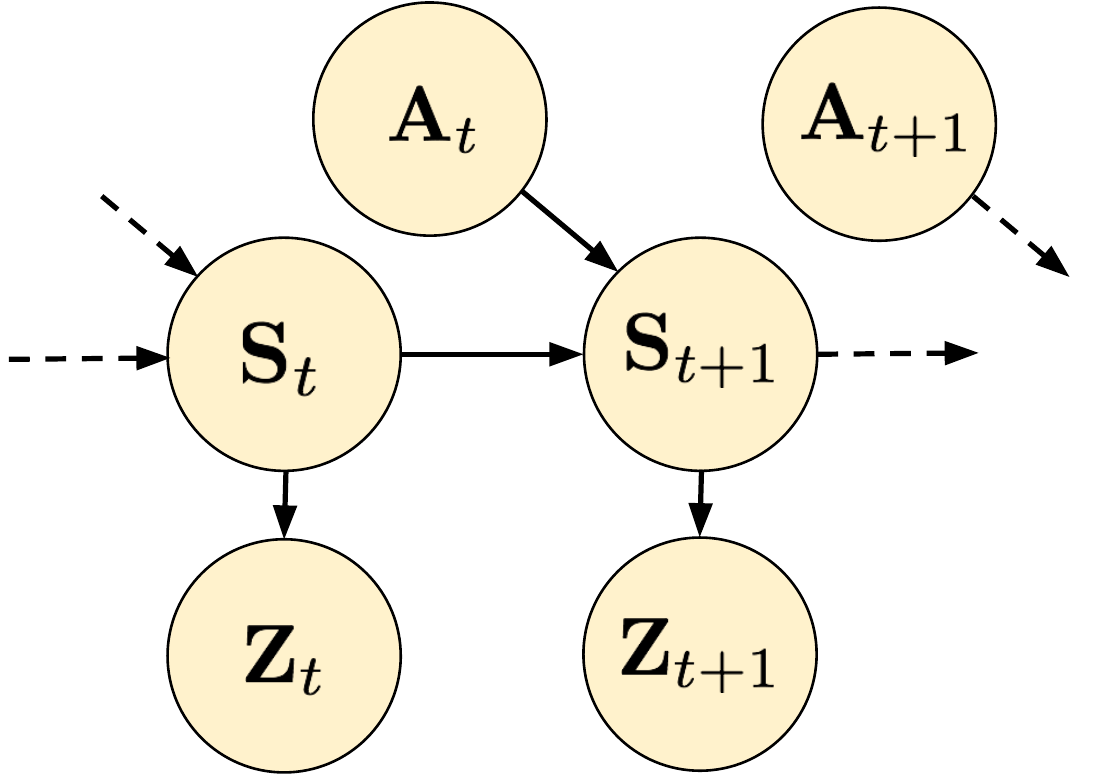}
  \caption{
  Probabilistic graphical model illustrating
  the state representation learning problem: estimate representation $\varz$ from original state $\vars$. 
  }
  \label{fig:pgm}
    \vspace{-0.3cm}
\end{wrapfigure}
The goal of representation learning for RL is to find a compact representation of the state space that discards
details in the state that are not relevant for representing the policy or value function, while preserving task-relevant information (see Figure~\ref{fig:pgm}).
While state aggregation methods typically define deterministic rules to group states in the representation~\citep{bean1987aggregation, li2006towards}, MI-based representation learning methods used for deep RL treat the representation as a random variable~\citep{nachum2018near, oord2018representation, pathak2017curiosity}.
Accordingly, we formalize a representation as a stochastic mapping between original state space and representation space. 
\vspace{-.1in}
\begin{definition}
A \textbf{stochastic representation} $\rep(\vals)$ is a mapping from states $\vals\in\sets$ to a probability distribution $p(\varz|\vars=\vals)$ over elements of a new representation space $z\in\setz$.
\label{def:rep}
\end{definition}
\vspace{-.1in}
In this work we consider learning state representations from data by maximizing an objective $\obj$.
Given an objective $\obj$, we define the set of representations that maximize this objective as $\setrep_{\obj} = \{\rep ~|~\rep \in \argmax \obj (\phi)$. 
Unlike problem formulations for partially observed settings~\citep{watter2015embed, hafner2019learning, lee2019slac}, we assume that $\vars$ is a Markovian state; therefore the representation for a given state is conditionally independent of the past states, a common assumption in the state aggregation literature~\citep{bean1987aggregation, li2006towards}.
See Figure~\ref{fig:pgm} for a depiction of the graphical model.

\subsection{Sufficient Representations for Reinforcement Learning}

We now turn to the problem of evaluating stochastic representations for RL. Intuitively, we expect a useful state representation to be capable of representing an optimal policy in the original state space. 
\vspace{-.1in}
\begin{definition}
A representation $\rep$ is \textbf{$\pi^*$-sufficient} with respect to a set of reward functions $\setr$ if $\forall \valr \in \setr$, $\rep(\vals_1) = \rep(\vals_2) \implies \pi^*_{r}(\vara | \vals_1) = \pi^*_{r}(\vara | \vals_2)$.
\end{definition}
\label{def:suff-pi}
\vspace{-.1in}
When a stochastic representation $\rep$ produces the same distribution over the representation space for two different states $\vals_1$ and $\vals_2$ we say it \textit{aliases} these states. Unfortunately, as already proven in Theorem 4 of~\citet{li2006towards} for the more restrictive case of deterministic representations, being able to represent the optimal policy does not guarantee that it can be learned via RL in the representation space. 
Accordingly, we define a stricter notion of sufficiency that \emph{does} guarantee the convergence of Q-learning to the optimal policy in the original state space (refer to Theorem 4 of ~\citet{li2006towards} for the proof of this). 
\vspace{-.1in}
\begin{definition}
A representation $\rep$ is \textbf{$Q^*$-sufficient} with respect to a set of reward functions $\setr$ if $\forall \valr \in \setr$, $\rep(\vals_1) = \rep(\vals_2) \implies \forall \vala \text{,} Q^*_{r}(\vala , \vals_1) = Q^*_{r}(\vala , \vals_2)$.
\end{definition}
\label{def:suff-q}
\vspace{-.1in}
Note that $Q^*$-sufficiency implies $\pi^*$-sufficiency since an optimal policy can be recovered from the optimal Q-function via $\pi^*_{r}(s) = \argmax_a Q^*_{r}(s, a)$~\citep{sutton2018reinforcement}; however the converse is not true. 
We emphasize that while $Q^*$-sufficiency guarantees convergence, it does not guarantee tractability, which has been explored in prior work~\citep{lattimore2019learning, du2019good}.

We will further say that an \emph{objective} $\obj$ is sufficient with respect to some set of reward functions $\setr$ if all the representations that maximize that objective $\setrep_{\obj}$ are sufficient with respect to every element of $\setr$ according to the definition above. 
Surprisingly, we will demonstrate that not all commonly used objectives satisfy this basic qualification even when $\setr$ contains a single known reward function.

\section{Mutual Information for Representation Learning in RL}
\label{sec:obj}

In our study, we consider several MI objectives proposed in the literature.

\paragraph{Forward information:} 
A commonly sought characteristic of a state representation is to ensure it retains maximum predictive power over future state representations. 
This property is satisfied by representations maximizing the following MI objective,
\begin{equation}
    \fwd = I(\varz_{t+1}; \varz_t, \vara_t) = H(\varz_{t+1}) - H(\varz_{t+1} | \varz_t, \vara_t). \label{eq:mi_fwd}
\end{equation}
We suggestively name this objective ``forward information'' due to the second term, which is the entropy of the forward dynamics distribution.
This objective and closely related ones have been used in prior works~\citep{nachum2018near, schwarzerdata, shu2020predictive, mazoure2020deep}.

\paragraph{State-only transition information:}
Several popular methods~\citep{oord2018representation, anand2019unsupervised, stooke2020decoupling} optimize a similar objective, but do not include the action:
\begin{equation}
    \cpc = I(\varz_{t+k}; \varz_t) = H(\varz_{t+k}) - H(\varz_{t+k} | \varz_t). \label{eq:mi_state}
\end{equation}
As we will show, the exclusion of the action can have a profound effect on the characteristics of the resulting representations.

\paragraph{Inverse information:} 
Another commonly sought characteristic of state representations is to retain maximum predictive power of the action distribution that could have generated an observed transition from $\vals_t$ to $\vals_{t+1}$.
Such representations can be learned by maximizing the following information theoretic objective:
\begin{equation}
    \inv = I(\vara_t; \varz_{t+k} | \varz_t) = H(\vara_t | \varz_t) - H(\vara_t | \varz_t, \varz_{t+k}) 
    \label{eq:mi_inv}
\end{equation}
We suggestively name this objective ``inverse information'' due to the second term, which is the entropy of the inverse dynamics. 
A wide range of prior work learns representations by optimizing closely related objectives~\citep{gregor2016variational, shelhamer2016loss, agrawal2016learning, pathak2017curiosity, yuunsupervised, zhang2018decoupling}.
Intuitively, inverse models allow the representation to capture only the elements of the state that are necessary to predict the action, allowing the discard of potentially irrelevant information.

\section{Sufficiency Analysis}
\label{sec:sufficiency}

In this section we analyze the sufficiency for control of representations obtained by maximizing each objective presented in Section~\ref{sec:obj}. 
To focus on the representation learning problem, we decouple it from RL by assuming access to a dataset of transitions collected with a policy that reaches all states with non-zero probability, which can then be used to learn the desired representation. 
We also assume that distributions, such as the dynamics or inverse dynamics, can be modeled with arbitrary accuracy, and that the maximizing set of representations for a given objective can be computed. 
While these assumptions might be relaxed in any practical RL algorithm, and exploration plays a confounding role, the ideal assumptions underlying our analysis provide the best-case scenario for objectives to yield provably sufficient representations.  
In other words, objectives found to be provably insufficient under ideal conditions will continue to be insufficient under more realistic assumptions.

\subsection{Forward Information}
\label{sec:fwd}

In this section we show that a representation that maximizes $\fwd$ is sufficient for optimal control under any reward function. 
This result aligns with the intuition that a representation that captures forward dynamics can represent everything predictable in the state space, and can thus be used to learn the optimal policy for any task. 
This strength can also be a weakness if there are many predictable elements that are irrelevant for downstream tasks, since the representation retains more information than is needed for the task. 
Note that the representation can still discard information in the original state, such as independent random noise at each timestep.

\begin{proposition}
$\fwd$ is sufficient for all reward functions.
\label{prop:fwd_suff}
\end{proposition}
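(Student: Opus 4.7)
The plan is to prove the proposition by first establishing that any $\rep \in \setrep_{\fwd}$ preserves the one-step dynamics of the MDP, and then leveraging the Bellman equation to show that aliased states share an optimal action for any reward function.

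To begin, I would bound $\fwd$ from above by the mutual information of the MDP's one-step dynamics. Since $\varz_t$ and $\varz_{t+1}$ are produced from $\vars_t$ and $\vars_{t+1}$ respectively via the (stochastic, memoryless) map $\rep$, the data processing inequality gives
\begin{equation*}
\fwd = I(\varz_{t+1}; \varz_t, \vara_t) \le I(\vars_{t+1}; \varz_t, \vara_t) \le I(\vars_{t+1}; \vars_t, \vara_t).
\end{equation*}
The identity representation attains the outer bound, so every maximizer $\rep$ realizes equality throughout. Tightness of the second inequality, combined with the chain rule $I(\vars_{t+1}; \vars_t, \vara_t) = I(\vars_{t+1}; \varz_t, \vara_t) + I(\vars_{t+1}; \vars_t \mid \varz_t, \vara_t)$, forces the conditional independence $\vars_{t+1} \perp \vars_t \mid \varz_t, \vara_t$. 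Together with the assumed full support of the data distribution over $(\vals_t, \vala_t)$, this lifts to the pointwise equality $p(\vars_{t+1} \mid \vals_t, \vala_t) = p(\vars_{t+1} \mid \rep(\vals_t), \vala_t)$ for every $\vals_t, \vala_t$, so any two states $\vals_1, \vals_2$ with $\rep(\vals_1) = \rep(\vals_2)$ induce identical next-state distributions under every action.

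The conclusion then falls out of the Bellman equation. Fix any reward $\funcr$ and write $Q^*_{\funcr}(\vals, \vala) = \funcr(\vals) + \gamma\, F_{\funcr}(\vals, \vala)$ with $F_{\funcr}(\vals, \vala) = \E_{\vals' \sim p(\cdot \mid \vals, \vala)}[\max_{\vala'} Q^*_{\funcr}(\vals', \vala')]$. The dynamics-preservation step yields $F_{\funcr}(\vals_1, \vala) = F_{\funcr}(\vals_2, \vala)$ whenever $\rep(\vals_1) = \rep(\vals_2)$, so $Q^*_{\funcr}(\vals_1, \vala) - Q^*_{\funcr}(\vals_2, \vala) = \funcr(\vals_1) - \funcr(\vals_2)$ is a constant independent of $\vala$; hence $\argmax_{\vala} Q^*_{\funcr}(\vals_1, \vala) = \argmax_{\vala} Q^*_{\funcr}(\vals_2, \vala)$ and $\rep$ is $\pi^*$-sufficient for every $\funcr$. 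The main obstacle I expect is the careful handling of the equality conditions in the data processing inequality under stochastic representations, in particular verifying that the graphical model in figure~\ref{fig:pgm} genuinely makes $\rep$ a memoryless channel so the chain-rule decomposition is valid, and using full support to pass from a conditional independence statement to a pointwise equality of transition kernels. Once this structural property is in hand, the Bellman manipulation is routine, since the immediate-reward discrepancy $\funcr(\vals_1) - \funcr(\vals_2)$ is an action-independent constant that cannot change the optimal action.
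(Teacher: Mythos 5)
Your proof is correct in its essentials but takes a genuinely different route from the paper's. The paper works with the return random variable $\return_t$ directly: its Lemma~\ref{lemma:1} propagates the equality $I(\varz_{t+1};\varz_t,\vara_t)=I(\vars_{t+1};\vars_t,\vara_t)$ forward to any variable downstream of future states (in particular the return), and its Lemma~\ref{lemma:2} converts that mutual-information equality, together with $\return_t \perp \varz_t \mid \vars_t,\vara_t$, into the distributional identity $\E_{p(\varz_t|\vars_t=\vals)}\,p(\return_t|\varz_t,\vara_t)=p(\return_t|\vars_t=\vals,\vara_t)$, from which equality of $Q$-values for aliased states follows by taking expectations. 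You instead extract a purely structural property of the maximizers: the two-sided data-processing chain forces $I(\vars_{t+1};\vars_t\mid\varz_t,\vara_t)=0$, so $p(\vars_{t+1}\mid\vals_t,\vala_t)=\E_{\valz\sim\rep(\vals_t)}\,p(\vars_{t+1}\mid\valz,\vala_t)$ and any two aliased states share their transition kernel under every action --- i.e., every maximizer of $\fwd$ is a model-irrelevance (bisimulation-like) abstraction up to the reward. This buys you two things the paper's argument does not make explicit: the conclusion follows from the one-step condition alone (no quantification over all $k$ or all $t$, which Lemma~\ref{lemma:1} needs), and the Bellman step is then completely routine because $V^*$ is evaluated on original next states. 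The cost is that your argument is tied to the one-step objective, whereas the paper's Lemma~\ref{lemma:1}/\ref{lemma:2} machinery applies verbatim to multi-step variants $I(\varz_{t+k};\varz_t,\vara_t)$.

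One substantive caveat: the paper's formal claim (see the appendix restatement of Proposition~\ref{prop:fwd_suff}) is \emph{$Q^*$-sufficiency}, $\rep(\vals_1)=\rep(\vals_2)\implies Q^*_r(\vala,\vals_1)=Q^*_r(\vala,\vals_2)$ for all $\vala$, not merely $\pi^*$-sufficiency; the distinction matters because the paper argues (via Theorem 4 of Li et al.) that $\pi^*$-sufficiency alone does not guarantee that $Q$-learning in the representation space converges to an optimal policy. You stop at equality of argmaxes because you adopt the convention $Q^*(\vals,\vala)=r(\vals)+\gamma F(\vals,\vala)$, under which the residual $r(\vals_1)-r(\vals_2)$ can be nonzero. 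The paper sidesteps this by defining the return as $\return_t=\sum_{i\ge 1}\gamma^i \varr_{t+i}$, excluding the immediate reward, so that $Q^*$ is a functional of the next-state distribution alone. Your dynamics-preservation lemma already delivers exact equality of $Q^*$ under that convention, so the gap is definitional rather than mathematical --- but you should state the conclusion in the form the paper actually requires.
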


\begin{proof}
\emph{(Sketch)} 
We first show in Lemma \ref{lemma:1} that if $\varz_t, \vara_t$ are maximally informative of $\varz_{t+k}$, they are also maximally informative of the return $\bar{R}_t$.
Thanks to the Markov structure, we then show in Lemma \ref{lemma:2} that $\E_{p(\varz_t | \vars_t = \vals)} p(\return_t | \varz_t, \vara_t) = p(\return_t | \vars_t = \vals, \vara_t)$.
In other words, given $\rep$, additionally knowing $\vars$ doesn't change our belief about the future return. The $Q$-value is the expectation of the return, so $\varz$ has as much information about the $Q$-value as $\vars$. See Appendix~\ref{app:proof} for the proof.
\end{proof}

\subsection{State-Only Transition Information}

\begin{figure*}[b!]%
  \centering
  \includegraphics[width=5.5cm]{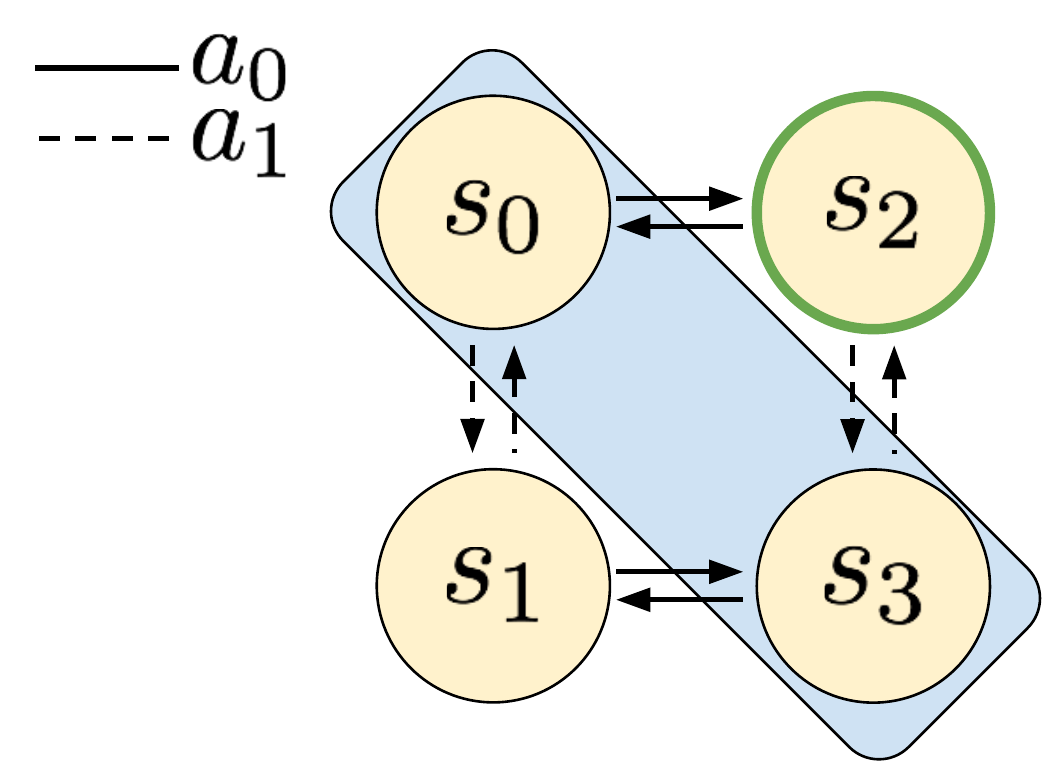}
  \includegraphics[width=7cm]{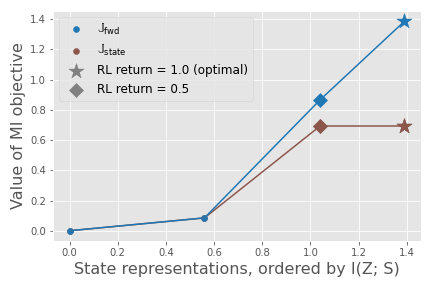}
  \caption{(left) A representation that aliases the states $\vals_0$ and $\vals_3$ into a single state maximizes $\cpc$ but is not sufficient to represent the optimal policy which must choose different actions in $\vals_0$ and $\vals_3$ to reach $\vals_2$ which yields reward. (right) Values of $\cpc$ and $\fwd$ for a few representative state representations, ordered by increasing $I(\varz; \vars)$. The representation that aliases $\vals_0$ and $\vals_3$ (plotted with a diamond) maximizes $\cpc$, but the policy learned with this representation may not be optimal (as shown here). The original state representation (plotted with a star) is sufficient.}%
  \label{fig:cpc_example}%
\end{figure*}
While $\cpc$ is closely related to $\fwd$, we now show that $\cpc$ is not sufficient.

\begin{proposition}
$\cpc$ is not sufficient for all reward functions. 
\end{proposition}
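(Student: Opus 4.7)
The plan is to prove insufficiency by exhibiting a concrete counterexample: an MDP together with a reward function $r$ and a representation $\phi \in \Phi_{\cpc}$ that is not $\pi^*$-sufficient (and hence not $Q^*$-sufficient) with respect to $\{r\}$. Since the definition of sufficiency is universally quantified over $\Phi_{\obj}$, a single counterexample suffices.

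The construction I would use is the MDP sketched in Figure~\ref{fig:cpc_example} (left). I would choose states $\vals_0,\vals_1,\vals_2,\vals_3$ and two actions so that, under the data-collecting policy, the marginal next-state distributions from $\vals_0$ and $\vals_3$ are identical, i.e.\ $p(\vars_{t+k}\mid \vars_t=\vals_0)=p(\vars_{t+k}\mid \vars_t=\vals_3)$ for all $k\ge 1$, while the conditional distributions given the action differ: in particular, the action reaching the rewarding state $\vals_2$ is $\vala_0$ from $\vals_0$ but $\vala_1$ from $\vals_3$ (or vice versa). The reward function will assign positive reward only at $\vals_2$, so the optimal policy must be different at $\vals_0$ and $\vals_3$.

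The key step is to show that the deterministic representation $\phi^\star$ that aliases $\vals_0$ and $\vals_3$ (and is injective elsewhere) lies in $\Phi_{\cpc}$. I would first argue a generic upper bound: for any representation $\phi$, $I(\varz_{t+k};\varz_t)\le I(\vars_{t+k};\vars_t)$ by the data-processing inequality, with equality iff $\varz_t$ and $\varz_{t+k}$ each retain all information about $\vars_t$ and $\vars_{t+k}$ that is relevant to the other. Then I would verify that $\phi^\star$ attains this upper bound by direct computation: since the construction guarantees $p(\vars_{t+k}\mid\vars_t=\vals_0)=p(\vars_{t+k}\mid\vars_t=\vals_3)$, aliasing these two states loses no predictive information about $\varz_{t+k}$, so $I(\varz^\star_{t+k};\varz^\star_t)=I(\vars_{t+k};\vars_t)$, which equals the value attained by the identity representation. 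Hence $\phi^\star\in\Phi_{\cpc}$.

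Finally, I would establish insufficiency of $\phi^\star$ with respect to $r$. Because $\phi^\star(\vals_0)=\phi^\star(\vals_3)$ but the optimal actions at $\vals_0$ and $\vals_3$ differ (by construction of the reward, these states require distinct actions to reach $\vals_2$), there is no policy on the representation space whose pullback agrees with $\pi^\star_r$ at both states; equivalently $\pi^\star_r(\vala\mid\vals_0)\ne \pi^\star_r(\vala\mid\vals_3)$, violating Definition~\ref{def:suff-pi}. The main obstacle is the second step: one must be careful that the constructed transition distributions really do render the aliased representation MI-maximizing, since any asymmetry in the forward dynamics or in the stationary distribution of $\vars_t$ under the behavior policy could break the equality $I(\varz^\star_{t+k};\varz^\star_t)=I(\vars_{t+k};\vars_t)$; I would therefore specify the behavior policy and transition kernel so that $\vals_0$ and $\vals_3$ are genuinely exchangeable from the point of view of action-marginalized dynamics, while remaining distinguishable once actions are conditioned on.
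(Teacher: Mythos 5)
Your proposal is correct and follows essentially the same route as the paper: the identical counterexample MDP of Figure~\ref{fig:cpc_example}, a uniform behavior policy so that $\vals_0$ and $\vals_3$ have identical action-marginalized next-state distributions, the observation that aliasing them therefore still maximizes $\cpc$, and the conclusion that the aliased representation cannot represent the optimal policy since the two states require different actions to reach the rewarding state. Your explicit data-processing-inequality framing of why the aliased representation attains the maximum is a slightly more careful rendering of the paper's informal argument, but it is the same proof.
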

\vspace{-.2in}

\begin{proof}
We show this by counter-example with the deterministic-transition MDP defined in Figure~\ref{fig:cpc_example} (left).
For all four states, let the two actions $\vala_0$ and $\vala_1$ be equally likely under the policy distribution.
In this case, each state gives no information about which of the two possible next states is more likely; this depends on the action.
Therefore, a representation maximizing $\cpc$ is free to alias states with the same next-state distribution, such as $\vals_0$ and $\vals_3$.
An alternative view is that such a representation can maximize $\cpc = H(\varz_{t+k}) - H(\varz_{t+k} | \varz_t)$ by reducing both terms in equal amounts - aliasing $\vals_0$ and $\vals_3$ decreases the marginal entropy as well as the entropy of predicting the next state starting from $\vals_1$ or $\vals_2$.
However, this aliased representation is not capable of representing the optimal policy, which must distinguish $\vals_0$ and $\vals_3$ in order to choose the correct action to reach $\vals_2$, which yields reward.
\end{proof}

In Figure~\ref{fig:cpc_example} (right), we illustrate the insufficiency of $\cpc$ computationally, by computing the values of $\cpc$ and $\fwd$ for different state representations of the above MDP, ordered by increasing $I(\varz; \vars)$.
At the far left of the plot is the representation that aliases all states, while the original state representation is at the far right (plotted with a star).
The representation that aliases states $\vals_0$ and $\vals_3$ (plotted with a diamond) maximizes $\cpc$, but is insufficient to represent the optimal policy.
Value iteration run with this state representation achieves only half the optimal return ($0.5$ versus $1.0$).

\subsection{Inverse Information}
\label{sec:inv}
\begin{figure*}[b!]
  \centering
  \includegraphics[width=4.5cm]{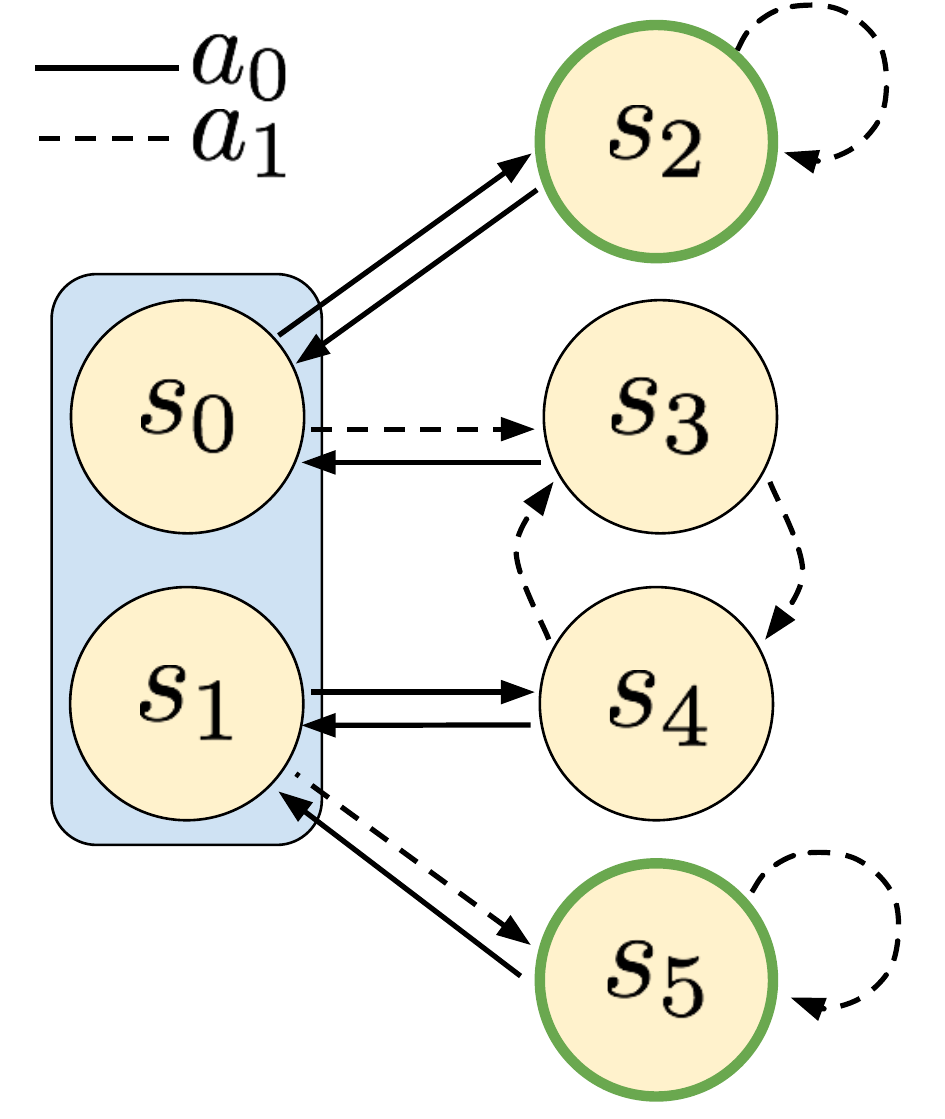}
  \includegraphics[width=7cm]{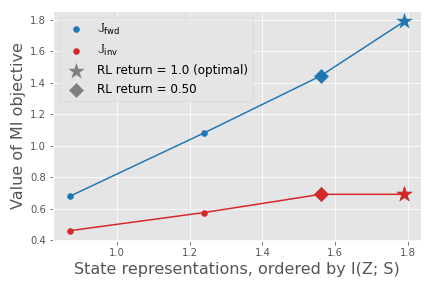}
  \caption{(left) In this MDP, a representation that aliases the states $\vals_0$ and $\vals_1$ into a single state maximizes $\inv$, yet is not sufficient to represent the optimal policy, which must distinguish between $\vals_0$ and $\vals_1$ in order to take a different action (towards the high-reward states outlined in green). (right) Values of $\inv$ and $\fwd$ for a few selected state representations, ordered by increasing $I(\varz; \vars)$. The representation that aliases $\vals_0$ and $\vals_1$ (plotted with a diamond) maximizes $\inv$, but is not sufficient to learn the optimal policy. Note that this counterexample holds also for $\inv + I(\varr; \varz)$.}%
  \label{fig:inv_example}%
\end{figure*}

Here we show that representations that maximize $\inv$ are not sufficient for control in all MDPs.
Intuitively, one way that the representation can be insufficient is by retaining only controllable state elements, while the reward function depends on state elements outside the agent's control. 
We then show that additionally representing the immediate reward is not enough to resolve this issue.

\begin{proposition}
$\inv$ is not sufficient for all reward functions. Additionally, adding $I(\varr_t; \varz_t)$ to the objective does not make it sufficient.
\end{proposition}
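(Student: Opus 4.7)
The plan is to prove insufficiency by exhibiting a counter-example MDP, mirroring the structure used in the proof that $\cpc$ is insufficient. I would construct a deterministic MDP of the kind sketched in Figure~\ref{fig:inv_example} (left) with two ``branching'' states $\vals_0, \vals_1$ both assigned immediate reward zero, and two actions $\vala_0, \vala_1$. From $\vals_0$ the actions lead to distinct successors $\vals_2, \vals_3$; from $\vals_1$ they lead to a disjoint pair $\vals_4, \vals_5$; positive reward is placed only at $\vals_2$ (reached by $\vala_0$ from $\vals_0$) and at $\vals_5$ (reached by $\vala_1$ from $\vals_1$). The optimal policy is then forced to take different actions at $\vals_0$ versus $\vals_1$, so any representation that aliases those two states cannot be $\pi^*$-sufficient (hence not $Q^*$-sufficient).

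The next step is to verify that the candidate aliasing representation $\rep$ which maps $\vals_0, \vals_1$ to a common latent $\varz^*$, while keeping the four successors $\vals_2, \vals_3, \vals_4, \vals_5$ mutually distinct, lies in $\setrep_{\inv}$. Under the standing assumption that the data-collection policy visits every state — taken to be uniform over actions, as in the $\cpc$ counter-example — the marginal entropy $H(\vara_t \mid \varz_t) = \log|\seta|$ is unchanged by the aliasing. Because the four successor latents are distinct, every observed pair $(\varz^*, \rep(\vals_j))$ still pins down a unique action, so $H(\vara_t \mid \varz_t, \varz_{t+1}) = 0$. Hence $\inv$ saturates its ceiling $\log|\seta|$, matching the value attained by the fully state-separating representation, and $\rep \in \argmax \inv$ despite being insufficient for control.

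For the augmented objective $\inv + I(\varr_t; \varz_t)$, the construction is deliberately set up so that $r(\vals_0) = r(\vals_1) = 0$; this forces the conditional $p(\varr_t \mid \varz_t = \varz^*)$ under $\rep$ to coincide with the corresponding conditional under any representation that does distinguish $\vals_0, \vals_1$. Consequently $I(\varr_t; \varz_t)$ is invariant to merging $\vals_0, \vals_1$, so $\rep$ also lies in the argmax of $\inv + I(\varr_t; \varz_t)$, establishing the second claim. A numerical corroboration analogous to Figure~\ref{fig:inv_example} (right) would then be produced by tabulating both objectives across representations ordered by $I(\varz; \vars)$, confirming that the aliasing representation ties the original state representation on the objective but attains only sub-optimal return under value iteration.

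The main obstacle, and the step I expect to treat most carefully, is confirming that $\inv$ genuinely attains its maximum simultaneously at the aliased and original representations — no other representation can strictly exceed it. This hinges on two design choices: the uniform-action behavior policy makes $H(\vara_t)$ maximal, giving a clean upper bound for $\inv$, and the successor states are kept distinguishable in $\rep$ so that the residual conditional entropy vanishes. Conceptually, the counter-example exploits the fact that $\inv$ is blind to state features irrelevant for recovering the action from an observed transition, and the reward augmentation fails precisely because those masked features carry identical immediate reward while determining long-horizon return.
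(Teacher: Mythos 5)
Your proposal is correct and takes essentially the same route as the paper: a deterministic counterexample MDP in which two branching states $\vals_0,\vals_1$ require different optimal actions yet can be aliased without losing action-identifiability from transitions (so $\inv$ is still maximized), with equal immediate reward at the aliased states defeating the $I(\varr_t;\varz_t)$ augmentation. Your explicit verification that $H(\vara_t\mid\varz_t)=\log|\seta|$ and $H(\vara_t\mid\varz_t,\varz_{t+1})=0$ under the aliasing is in fact a more careful rendering of the paper's one-line claim that ``given each pair of states, the action is identifiable.''
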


\begin{proof}
We show this by counter-example with the deterministic-transition MDP defined in Figure~\ref{fig:inv_example} (left). 
Now consider the representation that aliases the states $\vals_0$ and $\vals_1$. 
This state representation wouldn't be sufficient for control because the same actions taken from these two states lead to different next states, which have different rewards ($\vala_0$ leads to the reward from $\vals_0$ while $\vala_1$ leads to the reward from $\vals_1$).
However, this representation maximizes $\inv$ because, given each pair of states, the action is identifiable.
Interestingly, this problem cannot be remedied by simply requiring that the representation also be capable of predicting the reward at each state.
Indeed, the same insufficient representation from the above counterexample also maximizes this new objective as long as the reward at $\vals_0$ and $\vals_1$ are the same.
\end{proof}

Analogous to the preceding section, in Figure~\ref{fig:inv_example} (right), we plot the values of the objectives $\inv$ and $\fwd$ for state representations ordered by increasing $I(\varz; \vars)$.
The representation that aliases states $\vals_0$ and $\vals_1$ (plotted with a diamond) maximizes $\inv$, but is insufficient to represent the optimal policy; value iteration with this state representation achieves half the optimal return ($0.5$ versus $1.0$).

\section{Experiments}
\label{sec:exps}

To analyze whether the conclusions of our theoretical analysis hold in practice, we present experiments studying MI-based representation learning with image observations. 
We do not aim to show that any particular method is necessarily better or worse, but rather to determine whether the sufficiency arguments that we presented can translate into quantifiable performance differences in deep RL.

\subsection{Experimental Setup}
\label{sec:exp-setup}
To separate representation learning from RL, we first optimize each representation learning objective on a dataset of offline data, similar to the protocol in~\citet{stooke2020decoupling}.
Our datasets consist of $50$k transitions collected from a uniform random policy, which is sufficient to cover the state space in our environments.
We then freeze the weights of the state encoder learned in the first phase and train RL agents with the representation as state input. 
We perform our experiments on variations on the pygame~\citep{pygame} video game \textit{catcher}, in which the agent controls a paddle that it can move back and forth to catch fruit that falls from the top of the screen (see Figure~\ref{fig:envs}).
A positive reward is given when the fruit is caught and a negative reward when the fruit is not caught. 
The episode terminates after one piece of fruit falls.
We optimize $\fwd$ and $\cpc$ with noise contrastive estimation \citep{gutmann2010noise}, and $\inv$ by training an inverse model via maximum likelihood.
For the RL algorithm, we use the Soft Actor-Critic algorithm~\cite{haarnoja2018soft}, modified slightly for the discrete action distribution.
Please see Appendix~\ref{app:exp} for full experimental details.

\begin{wrapfigure}{R}{.5\textwidth}
\vspace{-5mm}
\centering
\includegraphics[width=3.2cm]{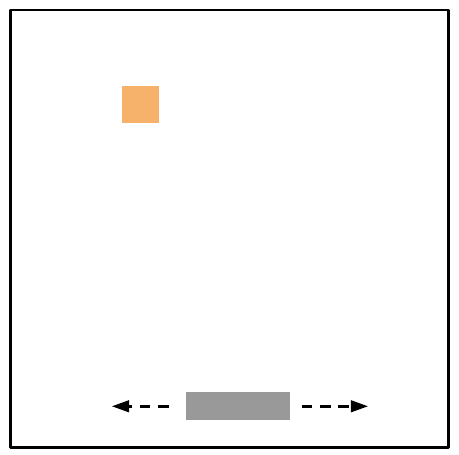}
\includegraphics[width=3.2cm]{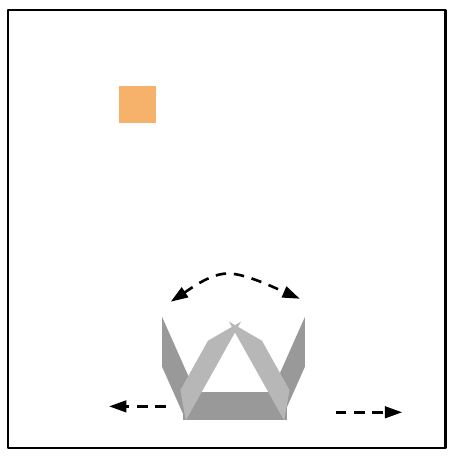}
\caption{(left) Original \textit{catcher} game in which the agent (grey paddle) moves left or right to catch fruit (yellow square) that falls from the top of the screen. (right) Variation \textit{catcher-grip} in which the agent is instantiated as a gripper, and must open the gripper to catch fruit.}
\vspace{-.1in}
\label{fig:envs}
\end{wrapfigure}

\subsection{Computational Results}
\label{sec:exp-results}
In principle, we expect that a representation learned with $\inv$ may not be sufficient to solve the \textit{catcher} game. 
Because the agent does not control the fruit, a representation maximizing $\inv$ might discard that information, thereby making it impossible to represent the optimal policy.
We observe in Figure~\ref{fig:results} (top left) that indeed the representation trained to maximize $\inv$ results in RL agents that converge slower and to a lower asymptotic expected return. 
Further, attempting to learn a decoder from the learned representation to the position of the falling fruit incurs a high error (Figure~\ref{fig:results}, bottom left), indicating that the fruit is not precisely captured by the representation.
The characteristics of this simulated game are representative of realistic tasks. 
Consider, for instance, an autonomous vehicle that is stopped at a stoplight.
Because the agent does not control the color of the stoplight, it may not be captured in the representation learned by $\inv$ and the resulting RL policy may choose to run the light.
\begin{figure}[!h]
    \centering
          \includegraphics[width=6.5cm, trim=0.25cm 0 0 0, clip]{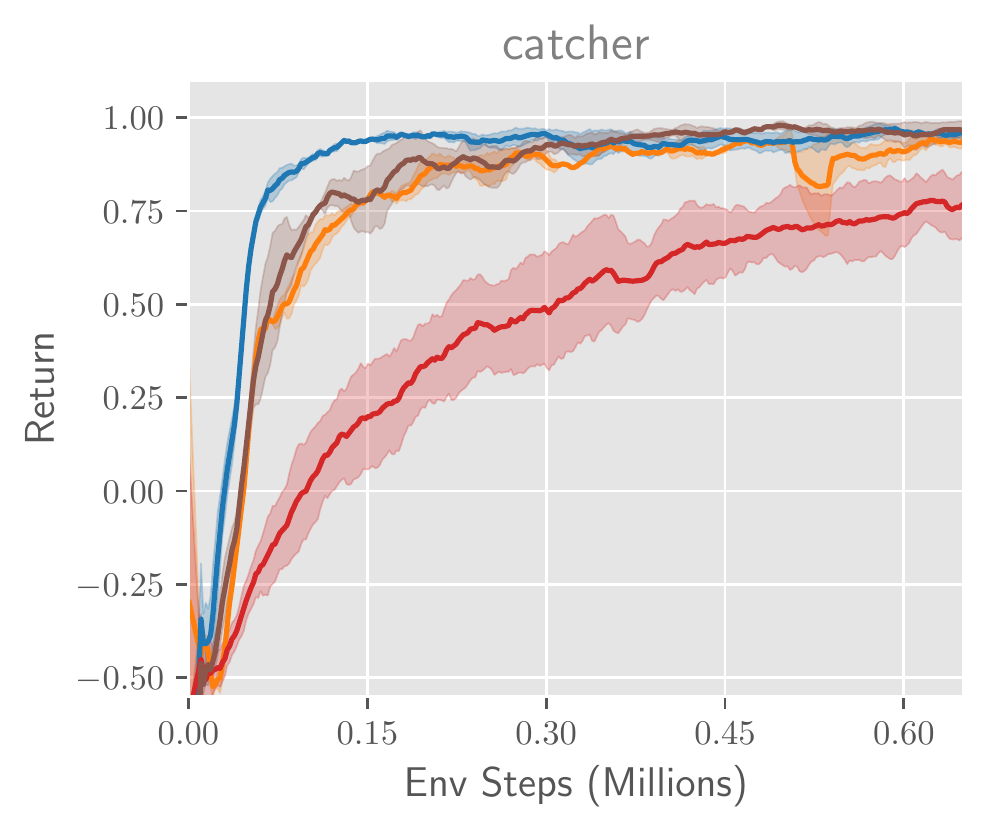} 
          \includegraphics[width=6.5cm, trim=0.25cm 0 0 0, clip]{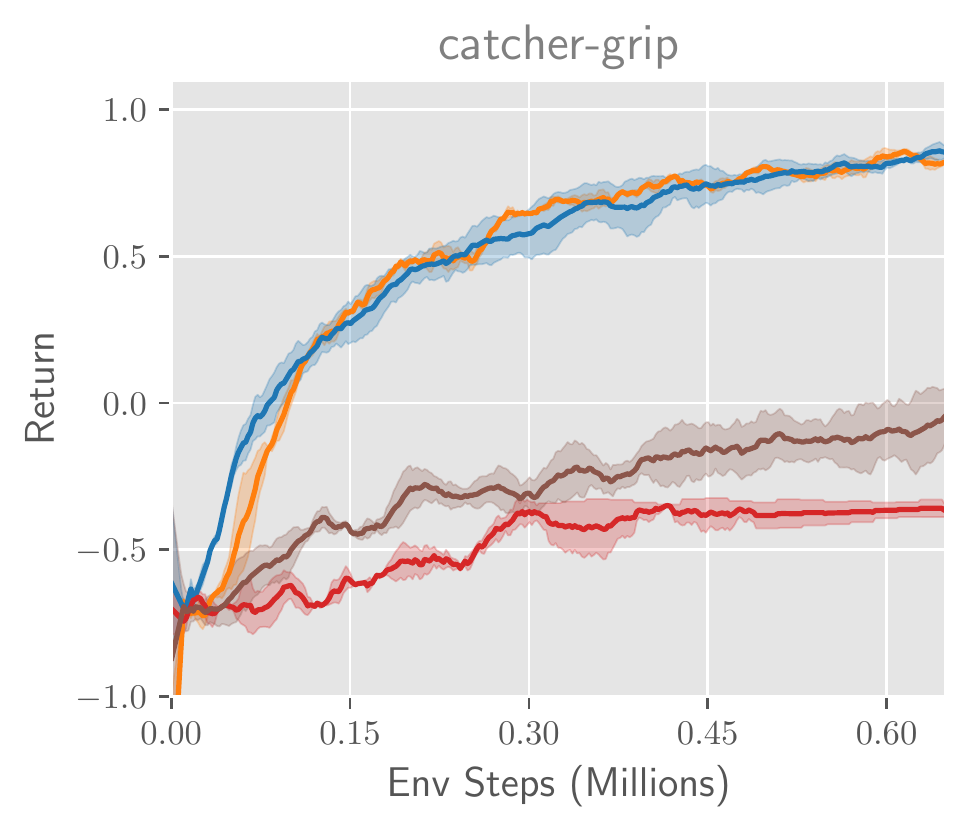}
          \includegraphics[height=.90cm]{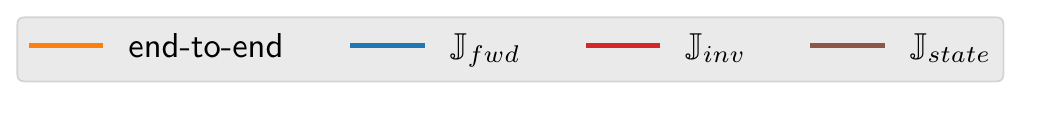} \\
          \hspace{.5cm}
          \begingroup
          \small
          \begin{tabular}{ ccc } \toprule
            Obj. & Agent Err. & Fruit Err.  \\
            \midrule
            $\fwd$ & 0.1 & 0.1 \\
            \midrule
            $\inv$ & 0.15 & 0.37 \\
            \bottomrule
            \end{tabular}
            \hspace{1cm}
          \begin{tabular}{ cccc }\toprule
            Obj. & Agent Err. & Fruit Err. & Grip Err.  \\
            \midrule
            $\fwd$ & 0.1 & 0.1 & 0.3 \\
            \midrule
            $\cpc$ & 0.1 & 0.1 & 0.46 \\
            \bottomrule
            \end{tabular}
        \endgroup
        \caption{ (left) Policy performance using learned representations as state inputs to RL, for the \textit{catcher} and \textit{catcher-grip} environments. (right) Error in predicting the positions of ground truth state elements from each learned representation. Representations maximizing $\inv$ need not represent the fruit, while representations maximizing $\cpc$ need not represent the gripper, leading these representations to perform poorly in \textit{catcher} and \textit{catcher-grip} respectively. }
\label{fig:results}

\end{figure}

In the second experiment, we consider a failure mode of $\cpc$.
We augment the paddle with a gripper that the agent controls and must be open in order to properly catch the fruit (see Figure~\ref{fig:envs}, right).
Since the change in the gripper is completely controlled by a single action, the current state contains no information about the state of the gripper in the future. 
Therefore, a representation maximizing $\cpc$ might alias states where the gripper is open with states where the gripper is closed.
Indeed, we see that the error in predicting the state of the gripper from the representation learned via $\cpc$ is about chance (Figure~\ref{fig:results}, bottom right).
This degrades the performance of an RL agent trained with this state representation since the best the agent can do is move under the fruit and randomly open or close the gripper (Figure~\ref{fig:results}, top right).
In the driving example, suppose turning on the headlights incurs positive reward if it's raining but negative reward if it's sunny.
The representation could fail to distinguish the state of the headlights, making it impossible to learn when to properly use them.
$\fwd$ produces useful representations in all cases, and is equally or more effective than learning representations purely from the RL objective alone (Figure~\ref{fig:results}).

\subsection{Increasing visual complexity via background distractors}
\label{exps:igr}
In this section, we test whether sufficiency of the state representation impacts agent performance in more visually complex environments by adding background distractors to the agent's observations.
We randomly generate images of $10$ circles of different colors and replace the background of the game with these images.
See Figure~\ref{fig:igr_obs} for examples of the agent's observations.

\begin{wrapfigure}{R}{.5\textwidth}
\vspace{-0.5cm}
\centering
\includegraphics[width=6cm]{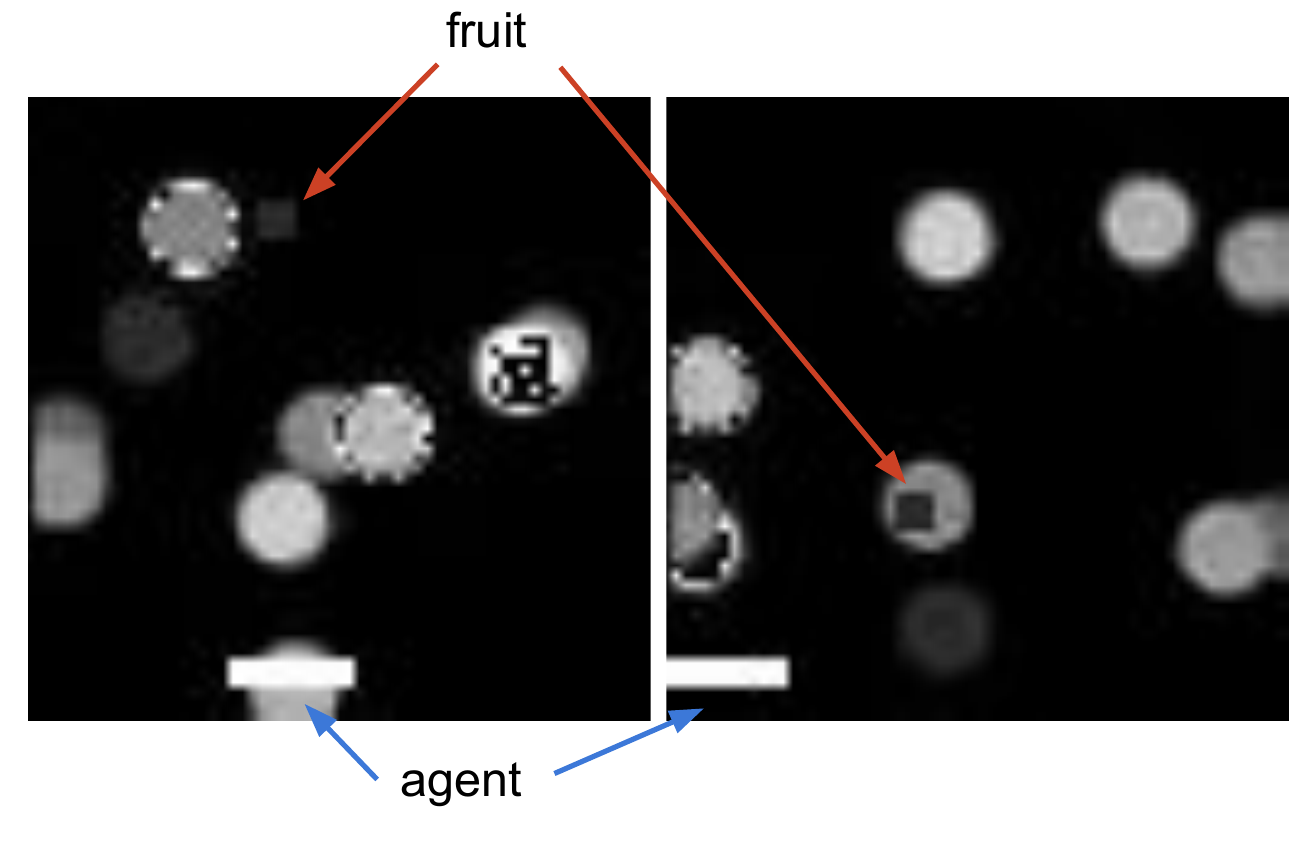}
\caption{Example $64$x$64$ pixel agent observations with background distractors (colored circles) randomly generated at each time step. The distractors increase the difficulty of learning a good representation, which must capture both agent and fruit.}
\label{fig:igr_obs}
\end{wrapfigure}

Analogous to Section~\ref{sec:exps}, in Figure~\ref{fig:results-igr} we show the performance of an RL agent trained with the frozen representation as input (top), as well as the error of decoding true state elements from the representation (bottom). 
In both games, end-to-end RL from images performs poorly, demonstrating the need for representation learning to aid in solving the task.
As predicted by the theory, the representation learned by $\inv$ fails in both games, and the representation learned by $\cpc$ fails in the \textit{catcher-grip} game. 
The difference in performance between sufficient and insufficient objectives is even more pronounced in this setting than in the plain background setting.
With more information present in the observation in the form of the distractors, insufficient objectives that do not optimize for representing all the required state information may be ``distracted'' by representing the background objects instead, resulting in low performance.  
In Appendix~\ref{app:distractors} we experiment with visual distractors that are temporally correlated across time.
We also consider variations on our analysis, evaluating how well the representations predict the predict the optimal $Q^*$ in Appendix~\ref{app:expertq}, and experimenting with a different data distribution for collecting the representation learning dataset in Appendix~\ref{app:data-dist}.

\begin{figure}[!t]
    \centering
          \includegraphics[width=6.5cm, trim=0.25cm 0 0 0, clip]{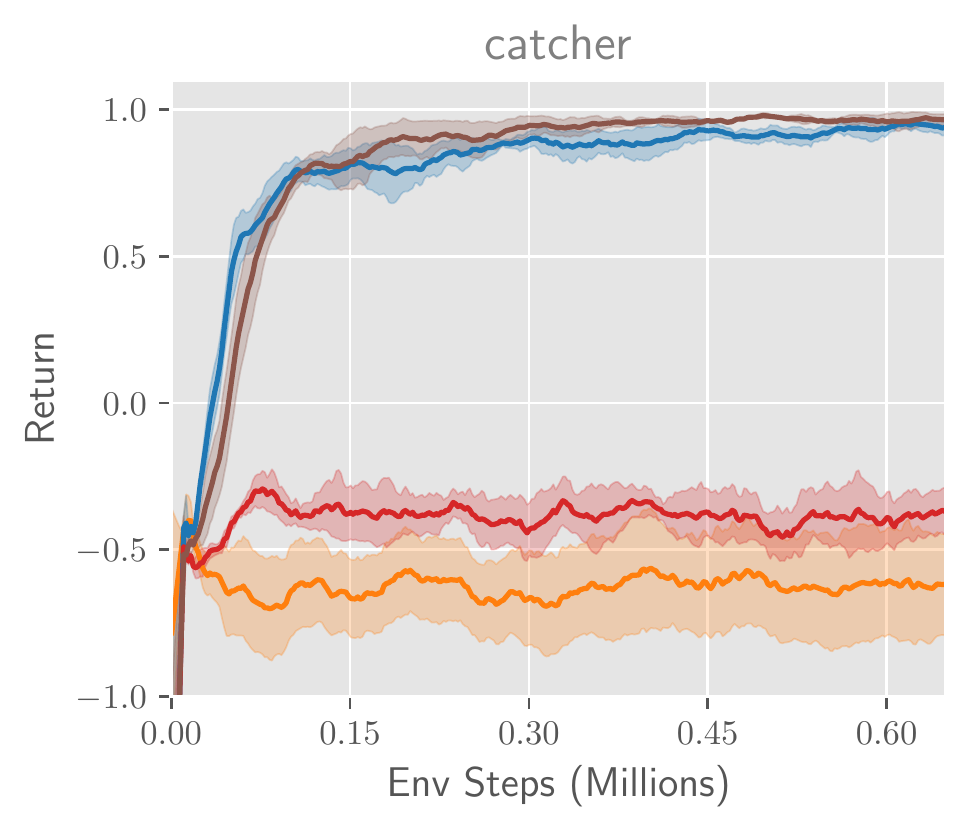} 
          \includegraphics[width=6.5cm, trim=0.25cm 0 0 0, clip]{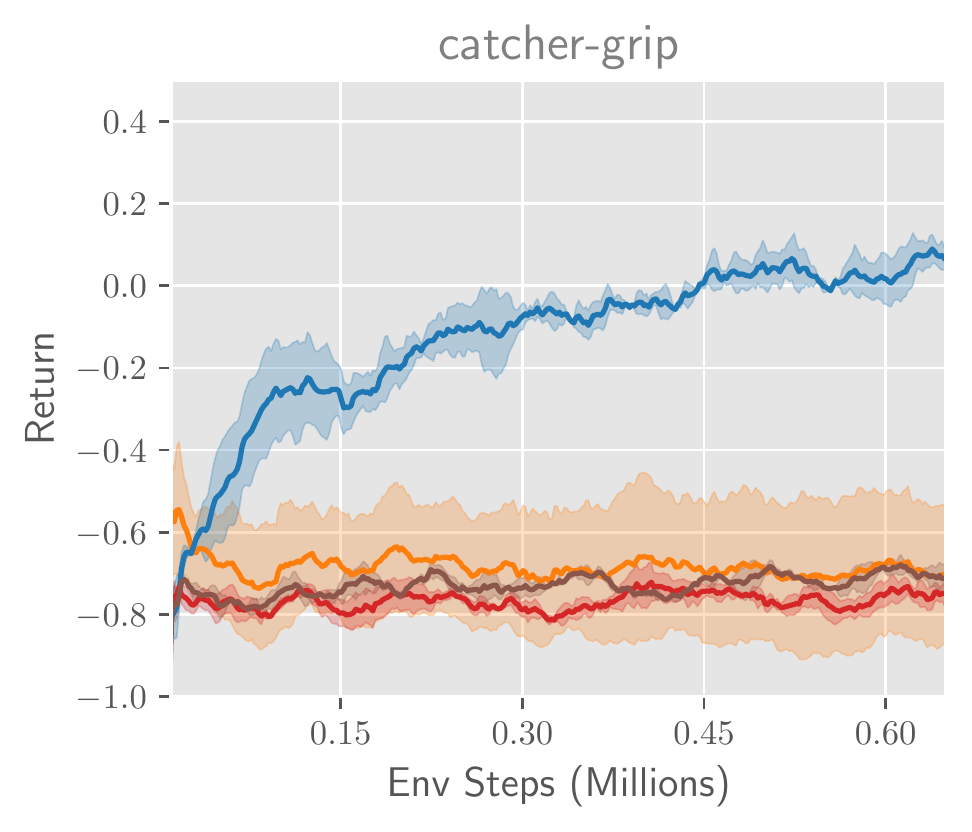} \\
          \includegraphics[height=.90cm]{figs/legend} \\
          \hspace{.5cm}
          \begingroup
          \small
          \begin{tabular}{ ccc } \toprule
            Obj. & Agent Err. & Fruit Err.  \\
            \midrule
            $\fwd$ & 0.13 & 0.14 \\
            \midrule
            $\inv$ & 0.15 & 0.47 \\
            \bottomrule
            \end{tabular}
            \hspace{1cm}
          \begin{tabular}{ cccc }\toprule
            Obj. & Agent Err. & Fruit Err. & Grip Err.  \\
            \midrule
            $\fwd$ & 0.12 & 0.1 & 0.25 \\
            \midrule
            $\cpc$ & 0.1 & 0.14 & 0.5 \\
            \bottomrule
            \end{tabular}
        \endgroup
        \caption{With background distractors added to the observations, the state representation learned via $\inv$ fails to capture the fruit object accurately in \textit{catcher} (left), and the representation learned via $\cpc$ continue to perform poorly at capturing the gripper state in \textit{catcher-grip} (right). The performance of the insufficient representations is even lower than in the clean background experiment.}
\label{fig:results-igr}
\vspace{-3mm}
\end{figure}

\section{Discussion}
\label{sec:discussion}
In this work, we analyze which common MI-based representation learning objectives are guaranteed to yield representations provably sufficient for learning the optimal policy.
We show that two common objectives $\cpc$ and $\inv$ yield theoretically insufficient representations, and provide a proof of sufficiency for $\fwd$. 
We then show that insufficiency of representations can degrade the performance of deep RL agents with experiments on a simulated environment with visual observations.
While an insufficient representation learning objective \emph{can} work well for training RL agents on simulated benchmark environments, the same objective may fail for a real-world system with different characteristics.
We believe that encouraging a focus on evaluating the sufficiency of newly proposed representation learning objectives can help better predict potential failures.

Of the objectives studied in our analysis, only $\fwd$ is sufficient to represent optimal Q-functions for any reward function. 
Note however that the representation obtained by this objective lacks a notion of ``task-relevance'' as it must be equally predictive of all predictable elements in the state. 
Therefore an interesting related question is whether it is possible to construct a general representation that is sufficient for any policy but contains less information than the smallest representation that maximizes $\fwd$. 
We conjecture that the answer is no, but leave investigation of this point for future work.

In addition to this direction, we see many other exciting avenues for future work.
First, identifying more restrictive MDP classes in which insufficient objectives are in fact sufficient.
For example, $\cpc$ is trivially sufficient when the environment dynamics and the agent's policy are deterministic. 
However, we hypothesize there may be more interesting MDP classes, related to realistic applications, in which generally insufficient objectives may be sufficient.
Additionally, extending our analysis to the partially observed setting would be more reflective of practical applications.
An interesting class of models to consider in this context are generative models such as variational auto-encoders~\citep{kingma2013auto}.
Prior work has shown that maximizing the ELBO alone cannot control the content of the learned representation~\citep{huszar2017maximum, phuong2018mutual, alemi2018fixing}. 
We conjecture that the zero-distortion maximizer of the ELBO would be sufficient, while other solutions would not necessarily be.
We see these directions as fruitful in providing a deeper understanding of the learning dynamics of deep RL, and potentially yielding novel algorithms for provably accelerating RL with representation learning.

\textbf{Acknowledgements.} We would like to thank Ignasi Clavera, Chelsea Finn, and Ben Poole for insightful conversations at different stages of this work. 
This research was supported by the DARPA Assured Autonomy program and ARL DCIST CRA W911NF-17-2-0181.

\bibliographystyle{abbrvnat}
\bibliography{references}
\appendix
\newpage
\onecolumn
\section{Appendix}
\subsection{Sufficiency of $\fwd$: Proof of Proposition~\ref{prop:fwd_suff}}
\label{app:proof}
\newcommand{\Qs}{\mathcal{R}_{\vals, \vala}}
\newcommand{\Qz}{\mathcal{R}_{\valz, \vala}}
We describe the proofs for the sufficiency results from Section \ref{sec:sufficiency} here. We begin by providing a set of lemmas, before proving the sufficiency of $\fwd$. 

\begin{figure*}[!h]
  \centering
  \includegraphics[width=5cm]{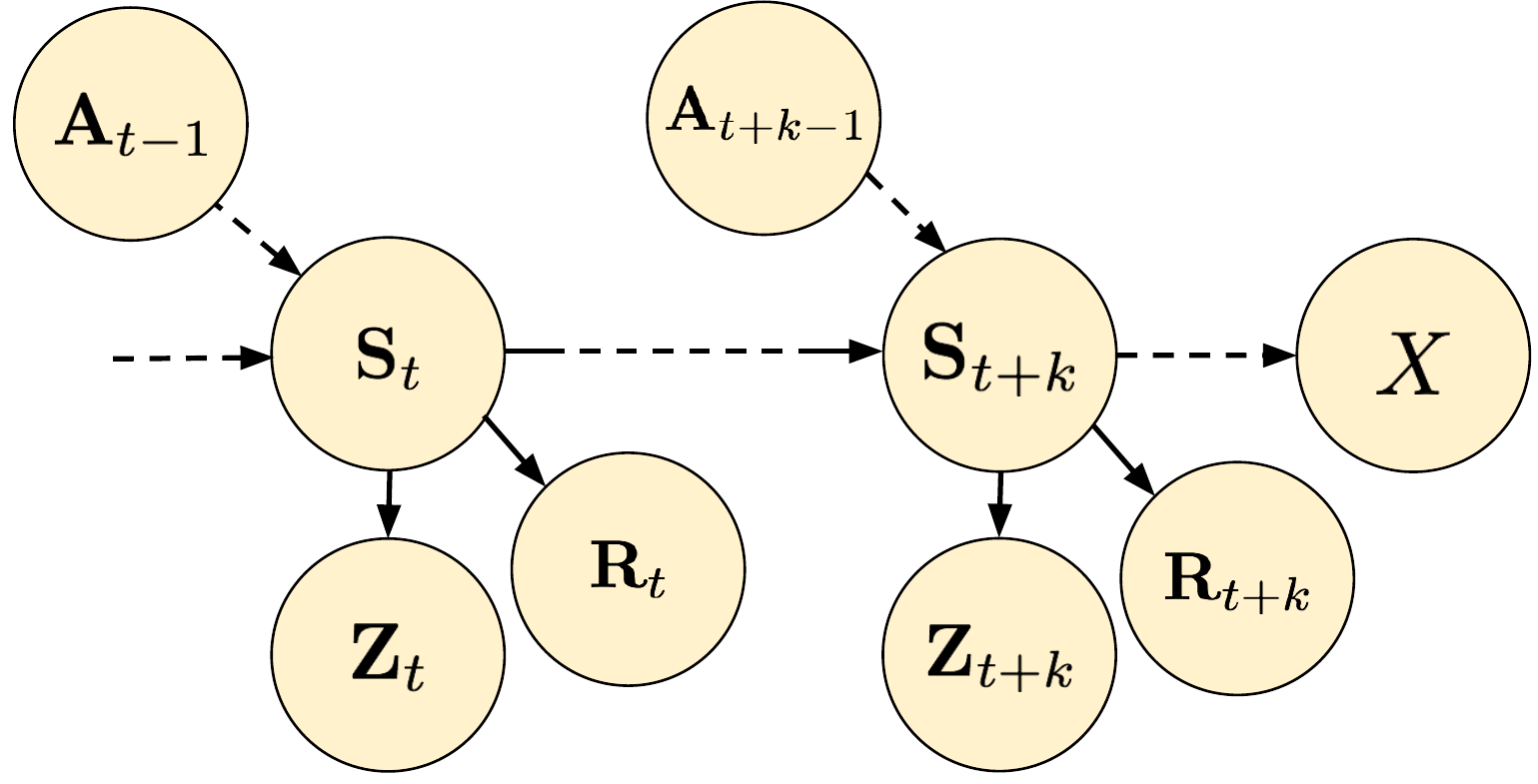}
  \caption{Graphical model for Lemma~\ref{lemma:1}, depicting true states $\vars$, states in the representation $\varz$, actions $\vara$, rewards $\varr$, and the variable $X$ (which we will interpret as the sum of future rewards in the proof of Proposition~\ref{prop:fwd_suff}).}
  \label{fig:proof_pgm}%
\end{figure*}

\begin{lemma}
Let $X$ be a random variable dependent on $\vars_{t+k}$, with the conditional independence assumptions implied by the graphical model in Figure~\ref{fig:proof_pgm}. (In the main proof of Proposition~\ref{prop:fwd_suff}, we will let $X$ be the sum of rewards from time $t+k$ onwards.) If $I(\varz_{t+k}; \varz_t, \vara_t) = I(\vars_{t+k}; \vars_t, \vara_t) \forall k$, then $I(X; \varz_t, \vara_t) = I(X; \vars_t, \vara_t) \forall k$. 
\label{lemma:1}
\end{lemma}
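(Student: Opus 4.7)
The plan is to show that the hypothesis forces the conditional independence $\vars_{t+k} \perp \vars_t \mid (\varz_t, \vara_t)$, lift it from $\vars_{t+k}$ to $X$ using that $X$ depends on the rest of the graph only through $\vars_{t+k}$, and then finish with a symmetric chain-rule computation. Fix an arbitrary $k$; the argument applies to each $k$ independently.

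First, I would sandwich the hypothesized mutual informations by two data processing inequalities that hold in the graphical model of Figure~\ref{fig:proof_pgm}:
\begin{equation*}
I(\varz_{t+k}; \varz_t, \vara_t) \;\le\; I(\vars_{t+k}; \varz_t, \vara_t) \;\le\; I(\vars_{t+k}; \vars_t, \vara_t).
\end{equation*}
The left inequality uses the Markov chain $(\varz_t, \vara_t) \to \vars_{t+k} \to \varz_{t+k}$, valid because $\varz_{t+k}$ is a stochastic function of $\vars_{t+k}$ alone. The right inequality uses the Markov chain $\vars_{t+k} \to (\vars_t, \vara_t) \to (\varz_t, \vara_t)$, valid because $\varz_t$ is a stochastic function of $\vars_t$ with noise independent of $\vars_{t+k}$. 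The assumption that the outer terms coincide forces both inequalities to be tight; tightness of the right one is equivalent to $\vars_{t+k} \perp \vars_t \mid (\varz_t, \vara_t)$.

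Next, I would lift this independence to $X$. The graphical model gives $p(X \mid \vars_{t+k}, \vars_t, \vara_t, \varz_t) = p(X \mid \vars_{t+k})$, so marginalizing over $\vars_{t+k}$ and using the independence above yields $p(X \mid \varz_t, \vara_t, \vars_t) = p(X \mid \varz_t, \vara_t)$, i.e.\ $I(X; \vars_t \mid \varz_t, \vara_t) = 0$. Then I would expand $I(X; \vars_t, \vara_t, \varz_t)$ in two ways with the chain rule:
\begin{equation*}
I(X; \vars_t, \vara_t) + I(X; \varz_t \mid \vars_t, \vara_t) \;=\; I(X; \varz_t, \vara_t) + I(X; \vars_t \mid \varz_t, \vara_t).
\end{equation*}
The first residual vanishes because $\varz_t$ depends only on $\vars_t$ while $X$ depends only on $\vars_{t+k}$, so $X \perp \varz_t \mid (\vars_t, \vara_t)$ by d-separation; the second residual vanishes by the preceding paragraph. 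The desired equality follows.

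The main obstacle is the data-processing sandwich and its tightness condition: one must verify carefully that the two Markov chains genuinely hold under the stochastic-representation definition (Definition~\ref{def:rep}), since $\varz_t$ is not a deterministic function of $\vars_t$. Once the d-separation bookkeeping is laid out, the rest is a short chain-rule argument, and the ``for all $k$'' quantification follows simply by repeating the argument at each $k$.
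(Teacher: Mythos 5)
Your proof is correct and uses essentially the same ingredients as the paper's: the data-processing sandwich $I(\varz_{t+k};\varz_t,\vara_t) \le I(\vars_{t+k};\varz_t,\vara_t) \le I(\vars_{t+k};\vars_t,\vara_t)$ together with the two chain-rule expansions of $I(\,\cdot\,;\vars_t,\vara_t,\varz_t)$. The only difference is presentational: you argue directly, extracting $I(\vars_{t+k};\vars_t\mid\varz_t,\vara_t)=0$ from tightness and pushing it forward to $X$, whereas the paper runs the contrapositive as a proof by contradiction, showing $I(X;\vars_t\mid\varz_t,\vara_t)>0$ would force $I(\vars_{t+k};\vars_t\mid\varz_t,\vara_t)>0$.
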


\newcommand{\s}{\vars_t}
\newcommand{\ns}{\vars_{t+k}}
\newcommand{\z}{\varz_t}
\newcommand{\nz}{\varz_{t+k}}
\newcommand{\nr}{X}
\newcommand{\act}{\vara_t}
\begin{proof}
Recall from Definition~\ref{def:rep} that $\rep(\vals) = p(\varz|\vars=\vals)$. For proof by contradiction, assume there is some $\rep$ and some $X$ such that $I(\nr; \z, \act) < I(\nr; \s, \act)$ and that $I(\nz; \z, \act) = I(\ns; \s, \act)$.
Now we know that because $\z \rightarrow \s \rightarrow \ns \rightarrow \nz$ form a Markov chain, by the data processing inequality (DPI) $I(\nz; \z, \act) \le I(\ns; \z, \act) \le I(\ns; \s, \act)$. 
We will proceed by showing that that $I(\nr; \z, \act) < I(\nr; \s, \act) \implies I(\ns; \z, \act) < I(\ns; \s, \act)$, which gives the needed contradiction.

Using chain rule, we can expand the following expression in two different ways.
\begin{equation}
    I(\nr; \z, \s, \act) = I(\nr; \z | \s, \act) + I(\nr; \s, \act) \\
    = 0 + I(\nr; \s, \act)
\label{eq:l1e1}
\end{equation}

\begin{equation}
    I(\nr; \z, \s, \act) = I(\nr; \s | \z, \act) + I(\nr; \z, \act)
\label{eq:l1e2}
\end{equation}

Note that the first term in Equation~\ref{eq:l1e1} is zero by the conditional independence assumptions in Figure~\ref{fig:proof_pgm}. Equating the expansions, we can see that to satisfy our assumption that $I(\nr; \z, \act) < I(\nr; \s, \act)$, we must have that $I(\nr; \s | \z, \act) > 0$. 

Now we follow a similar procedure to expand the following expression:
\begin{equation}
I(\ns; \z, \s, \act) = I(\ns; \z | \s, \act) + I(\ns; \s, \act) 
= 0 + I(\ns; \s, \act) 
\label{eq:l1e3}
\end{equation}

\begin{equation}
I(\ns; \z, \s, \act) = I(\ns; \s | \z, \act) + I(\ns; \z, \act)
\label{eq:l1e4}
\end{equation}

The first term in Equation~\ref{eq:l1e3} is zero by the conditional independence assumptions in Figure~\ref{fig:proof_pgm}. 
Comparing the first term in Equation~\ref{eq:l1e4} with the first term in Equation~\ref{eq:l1e2}, we see that because $\s \rightarrow \ns \rightarrow \nr$ form a Markov chain, by the DPI $I(\ns; \s | \z, \act) \ge I(\nr; \s | \z, \act)$.
Therefore we must have $I(\ns; \s | \z, \act) > 0$.
Combining Equations~\ref{eq:l1e3} and~\ref{eq:l1e4}:
\begin{equation}
I(\ns; \s, \act) = I(\ns; \s | \z, \act) + I(\ns; \z, \act) 
\end{equation}
Since $I(\ns; \s | \z, \act) > 0$, $I(\ns; \z, \act) < I(\ns; \s, \act)$, which is exactly the contradiction we set out to show.
\end{proof}

\newpage
\begin{lemma}
If $I(Y; Z) = I(Y; X)$ and $Y \perp Z | X$, then $\forall x, p(Y | X = x) = \E_{p(Y | Z)} p(Z | X=x)$.
\label{lemma:2}
\end{lemma}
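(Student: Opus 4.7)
My plan is to reduce the claimed identity to a single application of the law of total probability over $Z$, once the \emph{reverse} conditional independence $Y \perp X \mid Z$ has been established. Concretely,
\begin{equation*}
p(Y \mid X = x) \;=\; \sum_z p(Y \mid Z = z,\, X = x)\, p(Z = z \mid X = x),
\end{equation*}
and if $Y \perp X \mid Z$ then $p(Y \mid Z = z, X = x) = p(Y \mid Z = z)$, turning the right-hand side into $\E_{p(Z \mid X = x)}[p(Y \mid Z)]$, which is what the lemma asserts. So the entire content of the lemma is the extraction of $Y \perp X \mid Z$ from the two hypotheses $Y \perp Z \mid X$ and $I(Y; Z) = I(Y; X)$.

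\textbf{Main step.} I will obtain the reverse independence by expanding $I(Y; X, Z)$ in two ways with the chain rule of mutual information:
\begin{align*}
I(Y; X, Z) &= I(Y; X) + I(Y; Z \mid X), \\
I(Y; X, Z) &= I(Y; Z) + I(Y; X \mid Z).
\end{align*}
The hypothesis $Y \perp Z \mid X$ zeroes out $I(Y; Z \mid X)$, so the first line reduces to $I(Y; X)$; equating with the second line and substituting $I(Y; Z) = I(Y; X)$ forces $I(Y; X \mid Z) = 0$, i.e.\ $Y \perp X \mid Z$. This is the standard ``equality in the data-processing inequality implies a reverse Markov relation'' observation: if $Z$ already preserves all the information that $X$ has about $Y$, then conditioned on $Z$ the variable $X$ tells us nothing further about $Y$.

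\textbf{Expected obstacle.} The chain-rule identification is the only non-routine step; the rest is bookkeeping. The one subtlety worth flagging is that $p(Y \mid Z = z, X = x) = p(Y \mid Z = z)$ needs to hold only $p(Z \mid X = x)$-almost surely for the total-probability sum (or integral, in the continuous case) to go through, and this is automatic from $I(Y; X \mid Z) = 0$. With the lemma proved in this abstract form, it will plug directly into the proof of Proposition~\ref{prop:fwd_suff} by taking $Y$ to be the discounted return $\return_t$, $X = \vars_t$ (with $\vara_t$ carried along as a conditioning variable), and $Z = \varz_t$, thereby showing that integrating $p(\return_t \mid \varz_t, \vara_t)$ against $\rep(\vals_t)$ recovers $p(\return_t \mid \vars_t = \vals_t, \vara_t)$.
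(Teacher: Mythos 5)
Your proof is correct, and it takes a genuinely different route from the paper's. You first extract the reverse conditional independence $Y \perp X \mid Z$ by expanding $I(Y;X,Z)$ two ways with the chain rule, using $I(Y;Z\mid X)=0$ and the hypothesis $I(Y;Z)=I(Y;X)$ to force $I(Y;X\mid Z)=0$, and then the claimed identity is just the law of total probability over $Z$ with $p(Y\mid Z,X)$ replaced by $p(Y\mid Z)$. The paper instead works directly with the divergence $\KL\bigl[p(Y\mid X)\,\|\,\E_{p(Z\mid X)}p(Y\mid Z)\bigr]$: it upper-bounds it via Jensen's inequality, rewrites the MI equality through the tower property and the assumed independence $Y\perp Z\mid X$ to show that the $p(X)$-average of these (nonnegative) upper bounds is zero, and concludes each is zero. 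Your argument is shorter and isolates the structural content --- equality in the data-processing inequality makes $X\to Z\to Y$ a Markov chain as well --- and it actually delivers a slightly stronger intermediate fact (full conditional independence $p(Y\mid Z,X)=p(Y\mid Z)$, not merely the mixture identity). The trade-off is that you lean on the standard equivalence $I(Y;X\mid Z)=0\iff Y\perp X\mid Z$, whose usual proof is essentially the nonnegativity-of-KL averaging argument the paper carries out by hand; so the two proofs contain the same analytic kernel, packaged differently. Your caveat about the identity $p(Y\mid Z=z,X=x)=p(Y\mid Z=z)$ holding only $p(Z\mid X=x)$-almost surely is the right one, and the ``$\forall x$'' in the lemma should likewise be read as ranging over the support of $X$, which is harmless in the application since the paper assumes full-support state occupancy.
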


\begin{proof}
First note that the statement is not trivially true. Without any assumption regarding MI, we can write,
\begin{equation}
p(Y | X = x) = \int p(Y, Z | X=x) dz = \int p(Y | Z, X = x) p(Z | X = x) dz
\end{equation}
Comparing this with the statement we'd like to prove, we can see that the key idea is to show that the MI equivalence implies that $p(Y | Z, X = x) = p(Y | Z)$.

Another way to write what we want to show is
\begin{equation}
    D_{KL}[p(Y|X) || \E_{p(Z|X)} p(Y|Z)] = 0
\end{equation}
Our strategy will be to upper-bound this KL by a quantity that we will show to be $0$. Since KL divergences are always lower-bounded $0$, this will prove equality.
We begin by writing out the definition of the KL divergence, and then using Jensen's inequality to upper-bound it.
\begin{equation}
\begin{split}
    D_{KL}[p(Y|X) || \E_{p(Z|X)} p(Y|Z)] &= \E_{p(Y|X)} \log \left[ \frac{p(Y|X)}{\E_{p(Z|X)}p(Y|Z)} \right] \\
    &= \E_{p(Y|X)} [ \log p(Y|X) - \log \E_{p(Z|X)} p(Y|Z) ] \\
    &\le \E_{p(Y|X)} [ \log p(Y|X) - \E_{p(Z|X)} \log p(Y|Z) ]
\end{split}
\label{eq:l2e1}
\end{equation}

where the last inequality follows by Jensen's inequality. Because KL divergences are greater than $0$, this last expression is also greater than $0$.
Now let us try to relate $I(Y; Z) = I(Y; X)$ to this expression. We can re-write this equality using the entropy definition of MI.
\begin{equation}
H(Y) - H(Y | Z) = H(Y) - H(Y | X)
\end{equation}
The $H(Y)$ cancel and substituting the definition of entropy we have:
\begin{equation}
\E_{p(Y, Z)}  \log p(Y | Z) = \E_{p(Y, X)}  \log p(Y | X)
\end{equation}
On the right-hand side, we can use the Tower property to re-write the expectation as
\begin{equation}
\E_{p(Y, X)}  \log p(Y | X) = \E_{p(Z)} \E_{p(Y, X | Z)}  \log p(Y | X) = \E_{p(Y | X)p(X , Z)}  \log p(Y | X)
\end{equation}
Now we can use the Tower property again to re-write the expectation on both sides.
\begin{equation}
\begin{split}
\E_{p(X)}  \E_{p(Y, Z | X)}  \log p(Y | Z) &= \E_{p(X)}  \E_{p(Y| X)p(X, Z | X)}  \log p(Y | X) \\
\E_{p(X)}  \E_{p(Y | X) p(Z | X)}  \log p(Y | Z) &= \E_{p(X)}  \E_{p(Y | X) p(Z | X)}  \log p(Y | X) \\
\E_{p(X)}  \E_{p(Y | X) p(Z | X)}  [\log p(Y | X) &-  \log p(Y | Z)] = 0 \\
\E_{p(X)}  \E_{p(Y | X)}  [\log p(Y | X) &-  \E_{ p(Z | X)}\log p(Y | Z)] = 0 \\
\end{split}
\label{eq:l2e2}
\end{equation}

Compare this expression to Equation~\ref{eq:l2e1}. Because Equation~\ref{eq:l2e1} is greater than $0$, each term inside $\E_{p(X)}$ in Equation~\ref{eq:l2e2} is the same sign. If the sum of elements that all have the same sign is zero, then each element is zero. Therefore,
\begin{equation}
    0 = \E_{p(Y|X)} [ \log p(Y|X) - \E_{p(Z|X)} \log p(Y|Z) ] \ge D_{KL}[p(Y|X) || \E_{p(Z|X)} p(Y|Z)]
\end{equation}
Since the KL divergence is upper and lower-bounded by $0$, it must be equal to $0$, and therefore $p(Y | X) = \E_{p(Y | Z)} p(Z | X)$ as we wanted to show.
\end{proof}

Given the lemmas stated above, we can then use them to prove the sufficiency of $\fwd$. 

\newpage
\begin{proposition}
 
(Sufficiency of $\fwd$) Let $(\sets, \seta, \mathcal{T})$ be an MDP with dynamics $p(\vars_{t+1} | \vars_t, \vara_t)$. Let the policy distribution $p(\vara | \vars)$ and steady-state state occupancy $p(\vars)$ have full support on the action and state alphabets $\seta$ and $\sets$ respectively. Let $\setr$ be the set of all reward functions that can be expressed as a function of the state, $r: \sets \rightarrow \mathbb{R}$.~\footnote{If the reward is a function of the action, then $\fwd$ may not be sufficient. To see this, first consider an MDP where the reward depends on a random variable in the state that is independently sampled at each timestep. In this case, $\fwd$ is still sufficient, because all policies will be equally poor since the reward at the next timestep is completely unpredictable. However, if the reward depends on the \emph{action} taken, then a policy trained on the original state space could choose the high-reward action, but a representation learned by $\fwd$ needn't capture that information if it doesn't improve prediction of the next state. So in this latter case, $\fwd$ can be insufficient.} See Figure~\ref{fig:proof_pgm} for a graphical depiction of the conditional independence relationships between variables.

For a representation $\rep$, if $I(\varz_{t+1}; \varz_t, \vara_t)$
is maximized $\forall t>0$ then $\forall \valr \in \setr$ and $\forall \vals_1, \vals_2 \in \sets$,  $\rep(\vals_1) = \rep(\vals_2) \implies \forall \vala \text{,} Q^*_{r}(\vala , \vals_1) = Q^*_{r}(\vala , \vals_2)$.
\end{proposition}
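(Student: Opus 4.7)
The plan is to follow the two-lemma decomposition sketched by the authors. First, I would upgrade the one-step hypothesis that $I(\varz_{t+1}; \varz_t, \vara_t)$ saturates its data-processing upper bound (i.e.\ equals $I(\vars_{t+1}; \vars_t, \vara_t)$) to the multi-step identity $I(\varz_{t+k}; \varz_t, \vara_t) = I(\vars_{t+k}; \vars_t, \vara_t)$ for every $k \ge 1$ that Lemma~\ref{lemma:1} requires. I would proceed by induction on $k$, chaining the one-step equality at time $t+k-1$ with the one at time $t$ via the Markov structure $(\varz_t, \vara_t) \to (\vars_t, \vara_t) \to \vars_{t+1} \to \cdots \to \vars_{t+k} \to \varz_{t+k}$, conditioning on the intervening actions so that the DPI chain holds stepwise.

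Next, I would invoke Lemma~\ref{lemma:1} with $X$ set termwise to the discounted rewards $\gamma^k r(\vars_{t+k})$ for each $k \ge 1$ — each such term is a downstream child of $\vars_{t+k}$, matching the graph of Figure~\ref{fig:proof_pgm}. Summing, this yields $I(\return_t; \varz_t, \vara_t) = I(\return_t; \vars_t, \vara_t)$, after which I apply Lemma~\ref{lemma:2} with $Y = \return_t$, $X = \vars_t$, and $Z = \varz_t$, conditioning throughout on $\vara_t = \vala$ and on the (fixed) subsequent-action policy. The conditional independence $Y \perp Z \mid X$ holds by Definition~\ref{def:rep}: $\varz_t$ is drawn from $\rep(\vars_t)$ and carries no information about the future beyond what $\vars_t$ already provides. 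Lemma~\ref{lemma:2} then gives
$$p(\return_t \mid \vars_t = \vals, \vara_t = \vala) = \E_{p(\varz_t \mid \vars_t = \vals)} p(\return_t \mid \varz_t, \vara_t = \vala),$$
whose right-hand side depends on $\vals$ only through $\rep(\vals) = p(\varz_t \mid \vars_t = \vals)$. Taking expectations yields $Q^*_{r}(\vala, \vals_1) = Q^*_{r}(\vala, \vals_2)$ whenever $\rep(\vals_1) = \rep(\vals_2)$.

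One subtlety is that the return's distribution depends on the policy generating subsequent actions, and $\pi^*_{r}$ may a priori depend on $\vars$ in ways that entangle the $\varz$-level Markov structure. To close the loop, I would recast the final step as a Bellman-backup induction: start from any $Q_0$ that already factors through $\rep$, and verify that each backup preserves the factor-through property, using the one-step version of the calculation (applied with $Y = \varz_{t+1}$ to show the $\varz$-transition is the same for $\rep(\vals_1) = \rep(\vals_2)$, and similarly applied at a shifted timestep together with the full-support assumption on the steady-state occupancy to show $r(\vals_1) = r(\vals_2)$). The fixed point $Q^*_{r}$ then factors through $\rep$ as desired. The main obstacle I expect is the first step: on its own, one-step MI saturation does not chain without careful handling of the stochasticity of $\rep$ and of the intermediate action sequence, so formulating the multi-step property in a form that Lemma~\ref{lemma:1} can consume is where I would spend most of the effort.
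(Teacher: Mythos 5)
Your skeleton matches the paper's (use Lemma~\ref{lemma:1} to transfer the MI identity from next states to returns, use Lemma~\ref{lemma:2} to turn that into a distributional statement, then take expectations to get the $Q$-values), but the step by which you reach $I(\return_t; \varz_t, \vara_t) = I(\return_t; \vars_t, \vara_t)$ has a genuine gap. You apply Lemma~\ref{lemma:1} termwise to each $\gamma^k r(\vars_{t+k})$ and then ``sum'': mutual information is not additive over sums (or joints) of random variables, so knowing $I(X_k; \varz_t, \vara_t) = I(X_k; \vars_t, \vara_t)$ for every individual term $X_k$ does not yield the same identity for $\sum_k X_k$. The paper avoids this entirely: $\return_t = \sum_{i \ge 1} \gamma^i \varr_{t+i}$ is itself a \emph{single} random variable sitting downstream of $\vars_{t+1}$ in the chain $(\varz_t, \vara_t) \rightarrow (\vars_t, \vara_t) \rightarrow \vars_{t+1} \rightarrow \return_t$, so Lemma~\ref{lemma:1} is invoked exactly once, with $k=1$ and $X = \return_t$. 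This also makes your opening induction to the multi-step identity $I(\varz_{t+k}; \varz_t, \vara_t) = I(\vars_{t+k}; \vars_t, \vara_t)$ unnecessary -- and that induction is itself delicate, since one-step saturation at the $\varz$ level gives only one-directional DPI inequalities at each link and does not obviously chain.

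Separately, your Bellman-backup ``loop-closer'' leans on showing $r(\vals_1) = r(\vals_2)$ whenever $\rep(\vals_1) = \rep(\vals_2)$. That is false in general: $\fwd$ never sees the reward, so two states with identical transition structure but different rewards can be aliased by a $\fwd$-maximizing representation (take two states that transition uniformly to one another under every action; then $I(\vars_{t+1}; \vars_t, \vara_t) = 0$ and even the constant representation maximizes $\fwd$). The proposition survives because the paper's $Q$ is the expectation of the strictly-future return $\return_t$, which excludes $\varr_t$, so the current state's own reward never needs to factor through $\rep$ -- this is exactly the scenario discussed in the proposition's footnote. Your instinct that the policy-dependence of $p(\return_t \mid \cdot)$ deserves care is reasonable (the paper addresses it only by remarking that the argument applies to the $Q$-function of any policy--reward pair), but the repair should not route through reward-equality of aliased states.
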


\begin{proof}
Note that $(\varz_{t+1}; \varz_t, \vara_t)$ is maximized if the representation $\phi_{\setz}$ is taken to be the identity. In other words $ \max_{\phi} I(\varz_{t+1}; \varz_t, \vara_t) = I(\vars_{t+1}; \vars_t, \vara_t)$. 

Define the random variable $\return_t$ to be the discounted return starting from state $\vals_t$.
\begin{equation}
    \return_t = \sum_{i=1}^{H-t} \gamma^k \varr_{t+i}
\end{equation}
Plug in $\return_{t}$ for the random variable $X$ in Lemma~\ref{lemma:1}:
\begin{equation}
I(\varz_{t+1}; \varz_t, \vara_t) = I(\vars_{t+1}; \vars_t, \vara_t) \qquad  \implies \qquad I(\return_{t+1}; \varz_t, \vara_t) = I(\return_{t+1}; \vars_t, \vara_t)
\end{equation}

Now let $X = [\vars_t, \vara_t]$, $Y = \return_t$, and $Z = \varz_t$, and note that by the structure of the graphical model in Figure~\ref{fig:proof_pgm}, $Y \perp Z | X$. Plugging into Lemma~\ref{lemma:2}:

\begin{equation}
\E_{p(\valz_t | \vars_t = \vals)} p(\return_t | \varz_t, \vara_t) = p(\return_t | \vars_t = \vals, \vara_t)
\label{eq:exp_rbar}
\end{equation}

Now the $Q$-function given a reward function $r$ and a state-action pair $(\vals, \vala)$ can be written as an expectation of this random variable $\return_t$, given $\vars_t = \vals$ and $\vara = \vala$. (Note that $p(\return_t | \vars_t = \vals, \vara_t = \vala)$ can be calculated from the dynamics, policy, and reward distributions.) 

\begin{equation}
Q_r(\vals, \vala) = \E_{p(\return_t | \vars_t = \vals, \vara_t = \vala)} [\return_t] 
\label{eq:Q}
\end{equation}

Since $\rep(\vals_1)=\rep(\vals_2)$, $p(\valz_t | \vars_t=\vals_1) = p(\valz_t | \vars_t = \vals_2)$. Therefore by Equation~\ref{eq:exp_rbar}, $p(\return_t | \vars_t = \vals_1, \vara_t) = p(\return_t | \vars_t = \vals_2, \vara_t)$.
Plugging this result into Equation~\ref{eq:Q}, $Q_{r}(\vala, \vals_1) = Q_{r}(\vala, \vals_2)$.
Because this reasoning holds for all $Q$-functions~\footnote{Note this result is stronger than what we needed: it means that representations that maximize $\fwd$ are guaranteed to be able to represent even sub-optimal $Q$-functions. This makes sense in light of the fact that the proof holds for all reward functions - the sub-optimal $Q$ under one reward is the optimal $Q$ under another.}, it also holds for the optimal $Q$, therefore $Q^*_{r}(\vala, \vals_1) = Q^*_{r}(\vala, \vals_2)$.  

\end{proof}

\subsection{Experimental Details and Further Experiments}
\label{app:exp}

\subsubsection{Didactic Experiments}
The didactic examples are computed as follows.
Given the list of states in the MDP, we compute the possible representations, restricting our focus to representations that group states into “blocks.”  
We do this because there are infinite stochastic representations and the MI expressions we consider are not convex in the parameters of $p(Z | S)$, making searching over these representations difficult. 
Given each state representation, we compute the value of the MI objective as well as the optimal value function using exact value iteration.
In these examples, we assume that the policy distribution is uniform, and that the environment dynamics are deterministic. 
Since we consider the infinite horizon setting, we use the steady-state state occupancy in our calculations.

\subsubsection{Deep RL Experiments}
\label{app:exp-details}
The deep RL experiments with the catcher game are conducted as follows.
First, we use a uniform random policy to collect $50$k transitions in the environment. 
In this simple environment, the uniform random policy suffices to visit all states (the random agent is capable of accidentally catching the fruit, for example).
Next, each representation learning objective is maximized on this dataset. 
For all objectives, the images are pre-processed in the same manner (resized to $64$x$64$ pixels and normalized) and embedded with a convolutional network.
The convolutional encoder consists of five convolutional layers with ReLU activations and produces a latent vector with dimension $256$. 
We use the latent vector to estimate each mutual information objective, as described below.

\textbf{Inverse information}: We interpret the latent embeddings of the images $S_t$ and $S_{t+1}$ as the parameters of Gaussian distributions $p(Z | S_t)$ and $p(Z | S_{t+1})$. We obtain a single sample from each of these two distributions, concatenate them and pass them through a single linear layer to predict the action. The objective we maximize is the cross-entropy of the predicted actions with the true actions, as in Agrawal et al. 2016 and Shelhamer et al. 2016. To prevent recovering the trivial solution of preserving all the information in the image, we add an information bottleneck to the image embeddings. We tune the Lagrange multiplier on this bottleneck such that the action prediction loss remains the same value as when trained without the bottleneck. This approximates the objective $\min_{\phi} I(Z; S) s.t. I_{inv} = \max I_{inv}$. To use the learned encoder for RL, we embed the image from the current timestep and take the mean of the predicted distribution as the state for the RL agent. 

\textbf{State-only information}: We follow the Noise Contrastive Estimation (NCE) approach presented in CPC (Oord et al. 2018). Denoting $Z_t$ and $Z{t+1}$ as the latent embedding vectors from the convolutional encoders, we use a log-bilinear model as in CPC to compute the score: $f(Z_t, Z_{t+1}) = \exp (Z_t^T W  Z_{t+1})$ for the cross-entropy loss. We also experimented with an information bottleneck as described above, but found that it wasn’t needed to obtain insufficient representations. To use the learned encoder for RL, we embed the image from the current timestep and use this latent vector as the state for the RL agent.

\textbf{Forward information}: We follow the same NCE strategy as for state-only information, with the difference that we concatenate the action to $Z_t$ before computing the contrastive loss.

We then freeze the state encoder learned via MI-maximization and use the representation as the state input for RL.
The RL agent is trained using the Soft Actor-Critic algorithm~\cite{haarnoja2018soft}, modified for the discrete action distribution (the Q-function outputs Q-values for all actions rather than taking action as input, the policy outputs the action distribution directly rather than parameters, and we can directly compute the expectation in the critic loss rather than sampling). 
The policy and critic networks consist of two hidden linear layers of $200$ units each with ReLU activations. 
We run $5$ random seeds of each experiment and plot the mean as a solid line and one standard deviation as a shaded region.

Our implementation for the deep RL experiments is based on Garage~\citep{garage} and we use the default hyperparameters for the SAC algorithm found in that repository.
Garage is distributed under the MIT License. 
The environments we use are from the pygame library ~\citep{pygame} which is distributed under GNU LGPL version 2.1.
We used NVIDIA Titan X graphics cards to accelerate the training of the representation learning algorithms with image inputs.

\subsection{Alternative evaluation of the representation by predicting $Q^*$}
\label{app:expertq}
\begin{figure}[!t]
        \centering
          \includegraphics[width=6cm, trim=0.25cm 0 0 0, clip]{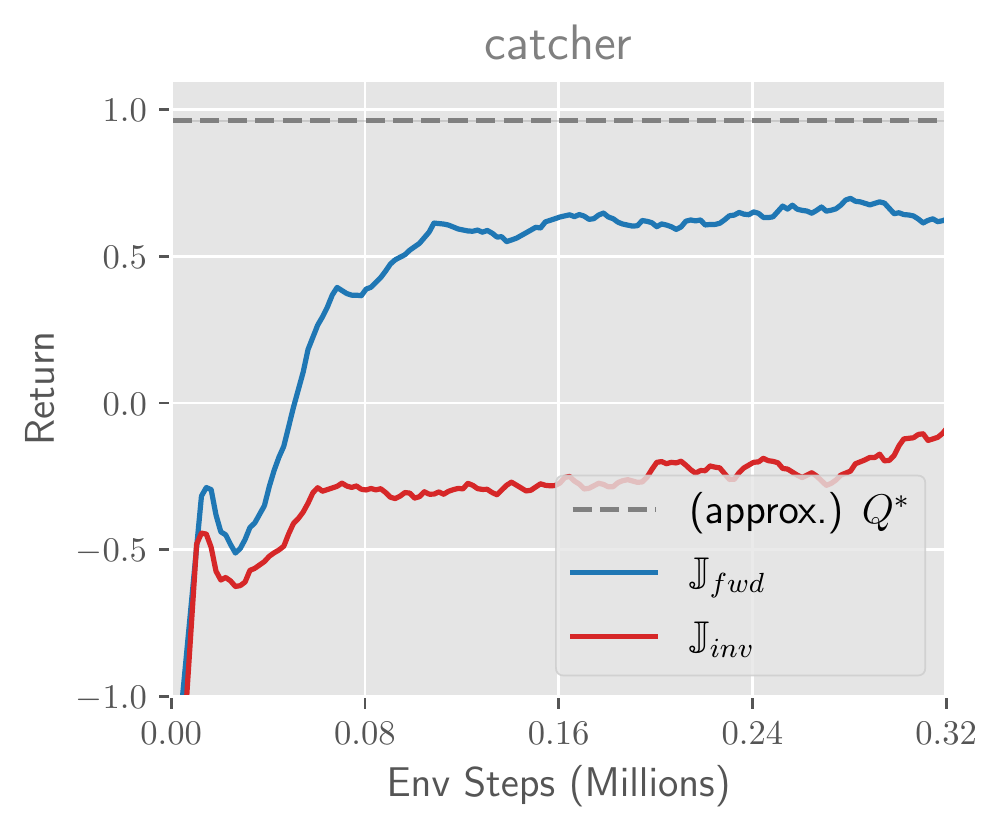}
          \includegraphics[width=6cm, trim=0.25cm 0 0 0, clip]{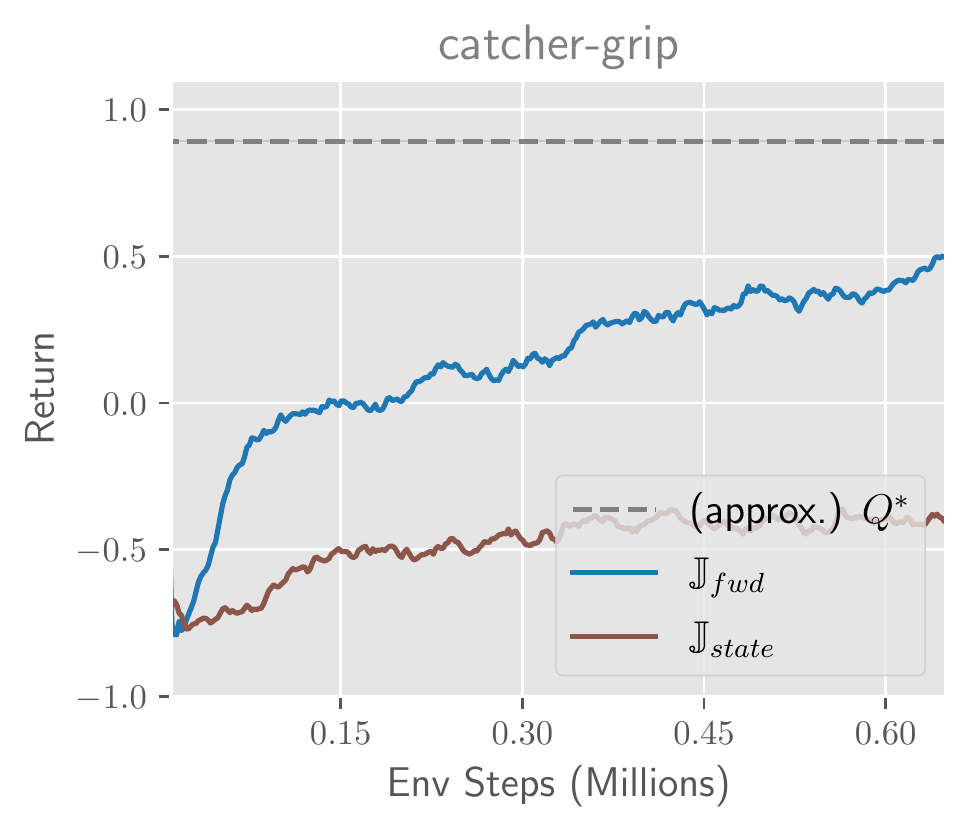}
          \caption{Policy performance obtained from $Q$-functions trained to predict $Q^*$ given state representations learned by each MI objective. Insufficient objectives $\inv$ and $\cpc$ respectively perform worse than sufficient objective $\fwd$.}
          \label{fig:predictQ}%
\end{figure}
In Section~\ref{sec:exps}, we evaluated the learned representations by running a temporal difference RL algorithm with the representation as the state input.
In this section, instead of using the bootstrap to learn the $Q$-function, we instead regress the $Q$-function to the optimal $Q^*$.
To do this, we first compute the (roughly) optimal $Q^*$ by running RL with ground truth game state as input and taking the learned $Q$ as $Q^*$.
Then, we instantiate a new RL agent and train it with the learned image representation as input, regressing the $Q$-function directly onto the values of $Q^*$.
We evaluate the policy derived from this new $Q$-function, and plot the results for both the \textit{catcher} and \textit{catcher-grip} environments in Figure~\ref{fig:predictQ}.
We find that similar to the result achieved using the bootstrap, the policy performs poorly when using representations learned by insufficient objectives ($\inv$ in \textit{catcher} and $\cpc$ in \textit{catcher-grip}).
Interestingly, we find that the error between the learned $Q$-values and the $Q^*$-values is roughly the same for sufficient and insufficient representations. 
We hypothesize that this discrepancy between $Q$-value error and policy performance is due to the fact that small differences in $Q$-values on a small set of states can result in significant behavior differences in the policy.

\subsection{Changing the data distribution used for representation learning}
\label{app:data-dist}
In Section~\ref{sec:exps}, we collect the dataset used to optimize the representation learning objectives with a uniform random policy. 
Here, we perform an ablation where we collect this dataset using the optimal policy, trained from ground truth state (agent and fruit positions).
We show the results for the \textit{catcher} environment in Figure~\ref{fig:catcher-opi}.
As predicted by our theoretical analysis, $\inv$ is still insufficient, and can fail to represent the fruit, leading to poor performance of the RL agent.

\subsection{Temporally correlated visual distractors}
\label{app:distractors}
In this section, we perform experiments with temporally correlated background distractors by animating the colored circles (see Figure~\ref{fig:igr_obs}) to bounce around the screen. 
We show results for the \textit{catcher} environment in Figure~\ref{fig:correlated}.
The results are similar to those in Figure~\ref{fig:results-igr} of the paper - the insufficient objective $\inv$ results in representations poorly suited to learning the task, while $\fwd$ yields a useful representation.
However, the $\fwd$ objective could suffer if the complexity of the background were further increased.
With limited model capacity, $\fwd$ might fixate on the moving background rather than the task-relevant fruit, because it is incentivised to predict all the state information that is predictable.
Regardless of the background complexity, $\inv$ is still insufficient in this MDP.

\begin{figure}[b]
    \begin{minipage}{.45\textwidth}
    \centering
    \includegraphics[width=6cm, trim=0.25cm 0 0 0.65cm, clip]{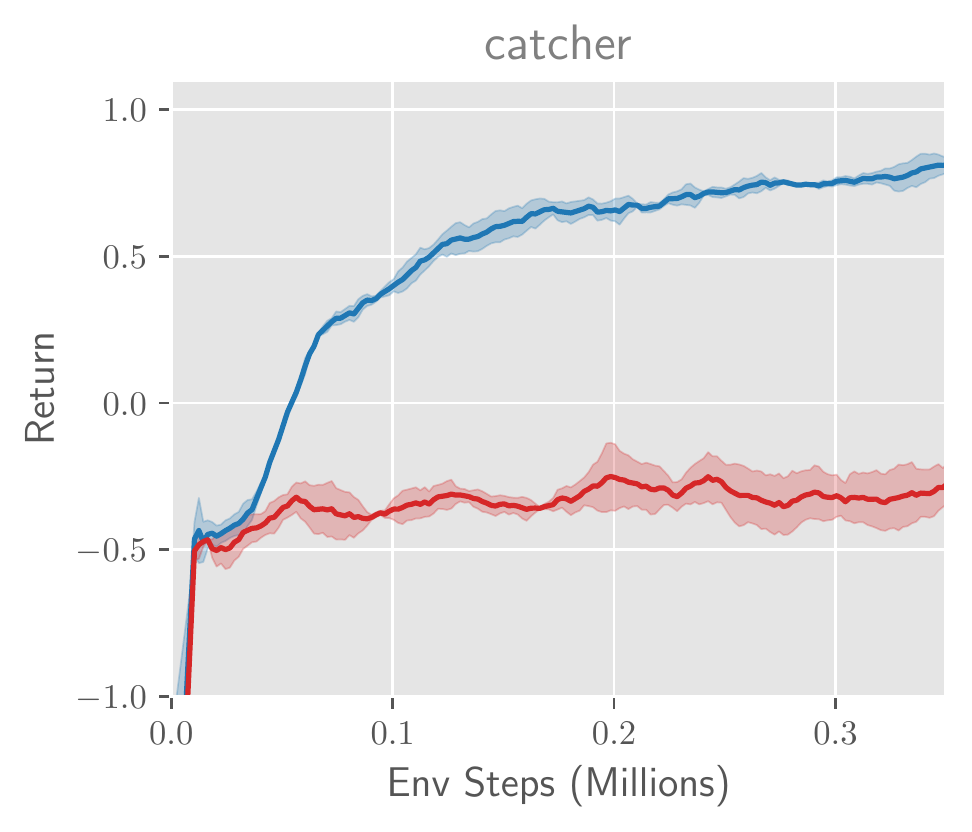} \\
    \includegraphics[height=.5cm, trim=0 0.3cm 0 0]{figs/legend}
    \caption{Performance of RL agents trained with state representations learned with data collected by the optimal policy, \textit{catcher} environment.}
    \label{fig:catcher-opi}
    \end{minipage}
    \hfill
    \begin{minipage}{.45\textwidth}
    \centering
    \includegraphics[width=6cm, trim=0.25cm 0 0 0.65cm, clip]{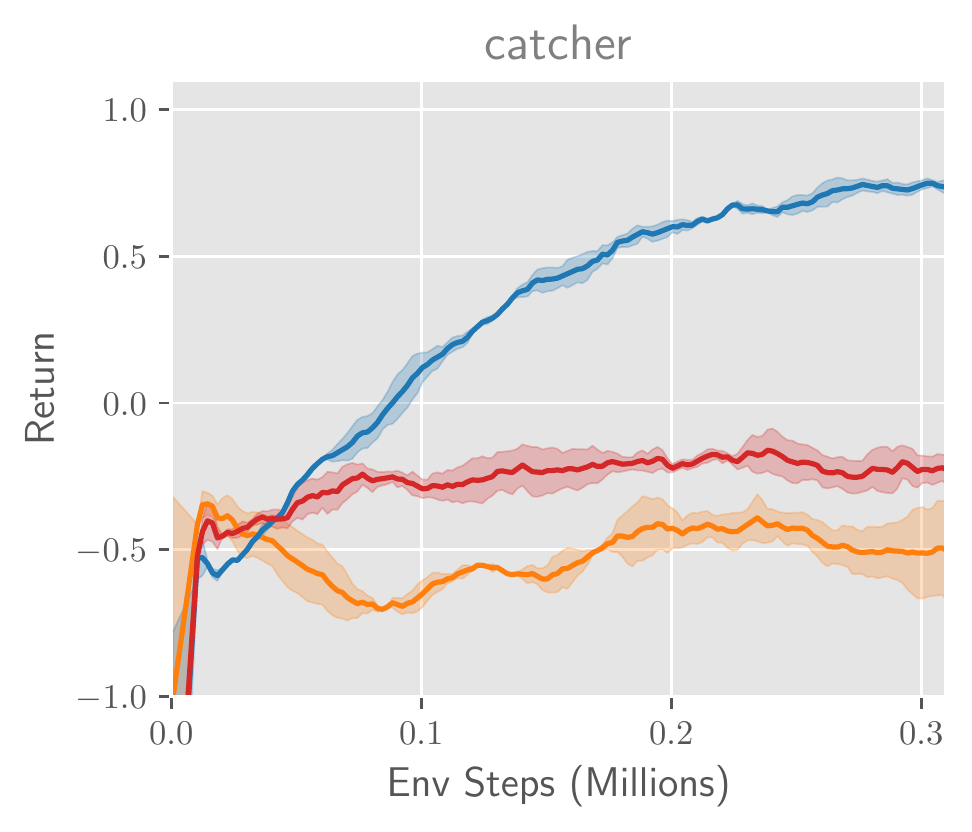} \\
    \includegraphics[height=.5cm, trim=0 0.3cm 0 0]{figs/legend}
    \caption{Performance of RL agents learned from state representations when observations contain visual distractors that are correlated across timesteps, \textit{catcher} environment.}
    \label{fig:correlated}
    \end{minipage}
\end{figure}

\end{document}